\documentclass{article}

\usepackage{microtype}
\usepackage{graphicx}
\usepackage{subcaption}
\usepackage{booktabs} 

\usepackage{amsmath,amsfonts,bm}

\def\eqref#1{equation~\ref{#1}}

\def\1{\bm{1}}

\DeclareMathAlphabet{\mathsfit}{\encodingdefault}{\sfdefault}{m}{sl}
\SetMathAlphabet{\mathsfit}{bold}{\encodingdefault}{\sfdefault}{bx}{n}

\usepackage{bm}
\usepackage{hyperref}
\usepackage{url}
\usepackage{booktabs}
\usepackage{multirow}
\usepackage{amsthm}
\usepackage{graphicx} 
\usepackage{colortbl} 
\usepackage{arydshln}

\usepackage{hyperref}

\usepackage[accepted]{icml2026}

\usepackage{amsmath}
\usepackage{amssymb}
\usepackage{mathtools}
\usepackage{amsthm}

\usepackage[capitalize,noabbrev]{cleveref}

\theoremstyle{plain}
\newtheorem{theorem}{Theorem}[section]
\newtheorem{proposition}[theorem]{Proposition}

\newtheorem{corollary}[theorem]{Corollary}
\theoremstyle{definition}
\newtheorem{definition}[theorem]{Definition}

\theoremstyle{remark}
\newtheorem{remark}[theorem]{Remark}

\usepackage[textsize=tiny]{todonotes}

\icmltitlerunning{Hierarchical Decision Making with Structured Policies: A Principled Design via Inverse Optimization}

\begin{document}

\twocolumn[
\icmltitle{Hierarchical Decision Making with Structured Policies: A Principled Design via Inverse Optimization}


\begin{icmlauthorlist}
\icmlauthor{Yuexuan Wang}{aff1}
\icmlauthor{Jingyuan Zhou}{aff2}
\icmlauthor{Kaidi Yang}{aff2}
\end{icmlauthorlist}

\icmlaffiliation{aff1}{Institute of Operations Research and Analytics, National University of Singapore}
\icmlaffiliation{aff2}{Department of Civil and Environmental Engineering, National University of Singapore}

\icmlcorrespondingauthor{Kaidi Yang}{kaidi.yang@nus.edu.sg}

\icmlkeywords{Machine Learning, ICML}

\vskip 0.3in
]

\printAffiliationsAndNotice{}
\begin{abstract}


Hierarchical decision-making frameworks are pivotal for addressing complex control tasks,  enabling agents to decompose intricate problems into manageable subgoals. Despite their promise, existing hierarchical policies face critical limitations: (i) reinforcement learning (RL)-based methods struggle to guarantee strict constraint satisfaction, and (ii) optimal control (OC)-based approaches often rely on myopic and computationally prohibitive formulations. To reconcile these trade-offs, hierarchical RL-OC architectures have emerged as a promising paradigm. However, the formulation of the lower-level optimization within these frameworks remains underexplored, often relying on heuristic or myopic objectives. In this work, we propose a principled framework that systematically integrates upper-level goal abstraction with structured lower-level decision making. We adopt an inverse optimization approach to inform the structure of the lower-level problem from expert demonstrations, ensuring that the objective of the lower-level policy remains aligned with the overall long-term task goal. To validate the approach, our framework is evaluated on distinct decision making tasks: network-based resource allocation and continuous collision avoidance. Empirical results demonstrate that our method consistently outperforms strong baselines based on end-to-end RL, learning-augmented optimal control, and existing hierarchical RL approaches in both efficiency and decision quality.

\end{abstract}

\section{Introduction}

Real-time decision-making in cyber-physical systems, such as robotics, power grids, and transportation~\citep{jendoubi2023multi,zhou2024enhancing,liang2025interaction}, is inherently challenging due to high-dimensional state and action spaces and complex physical constraints. Existing solutions largely stem from optimal control (OC) and deep reinforcement learning (RL). OC-based methods aim to optimize system performance over infinite or long horizons while ensuring stability and feasibility. 
These methods are well-suited for safety-critical systems due to their theoretical guarantees, but may scale poorly in high-dimensional or nonlinear settings~\citep{yu2013model}. In contrast, RL-based approaches directly learns a policy from interactions with the environment, which scale well to complex tasks. Nevertheless, these approaches require extensive training and lack safety or constraint satisfaction guarantees due to their black-box nature~\citep{zhao2023state, wang2020deep}. These trade-offs have motivated growing interest in combining OC-based and RL-based methods to exploit the strengths of both paradigms. 

A promising approach for combining OC- and RL-based methods is through a hierarchical architecture that decomposes decision-making into two sequential subproblems~\citep{lew2023robotic,karnchanachari2020practical}. The respective strengths of RL and OC fit naturally into this framework: the upper-level employs an RL policy for strategic planning, such as generating subgoals, while the lower level uses OC to ensure safe and feasible execution. This hierarchical architecture not only enhances scalability and feasibility but also aligns with human cognition, as humans tend to perform abstract planning guided by intrinsic motivation, grounded by fast, lower-level execution ~\citep{aubret2019survey}.

Despite the promise of hierarchical RL–OC frameworks, the formulation of the lower-level optimization problem remains underexplored. The lower-level controller must be both computationally efficient and aligned with the upper-level goals, since a poorly designed formulation may inadvertently exclude high-quality solutions. Existing approaches have several limitations. First, most hierarchical methods adopt long-horizon OC formulations at the lower level to preserve stability and feasibility guarantees ~\citep{songlearning, cheng2024hierarchical,landgraf2022hierarchical}. However, such long-horizon OC formulations are well known to incur prohibitive computational complexity for real-time applications \citep{karamanakos2014direct, krishnamoorthy2020adaptive}. Second, recent efforts leverage single-step OC to enhance computation efficiency by directly generating the value or constraints on the desired next state~\citep{gammelli2023graph,schmidt2024offline}. Nevertheless, these formulations typically rely on myopic objectives without an appropriately designed formulation, leading to suboptimal trajectories~\citep{rawlings2020model, lowrey2018plan}. The above challenges highlight the importance of formulating the lower-level optimization problem in a way that reduces sub-optimality while simultaneously ensuring tractable computational complexity.

To address these limitations, we propose an inverse optimization-guided approach that systematically informs the design of the lower-level policy using a small set of expert demonstrations (Figure \ref{fig:method}), which provide valuable insights. Rather than relying on manually specified objectives, we cast the construction of the lower-level cost function as an inverse optimization problem, aiming to recover a formulation under which expert decisions are (approximately) optimal. For a special class of lower-optimization problems with linear cost functions, we provide a theoretical characterization of the conditions under which the expert demonstrations are optimal. 


\begin{figure*}
    \centering
\includegraphics[width=0.7\linewidth]{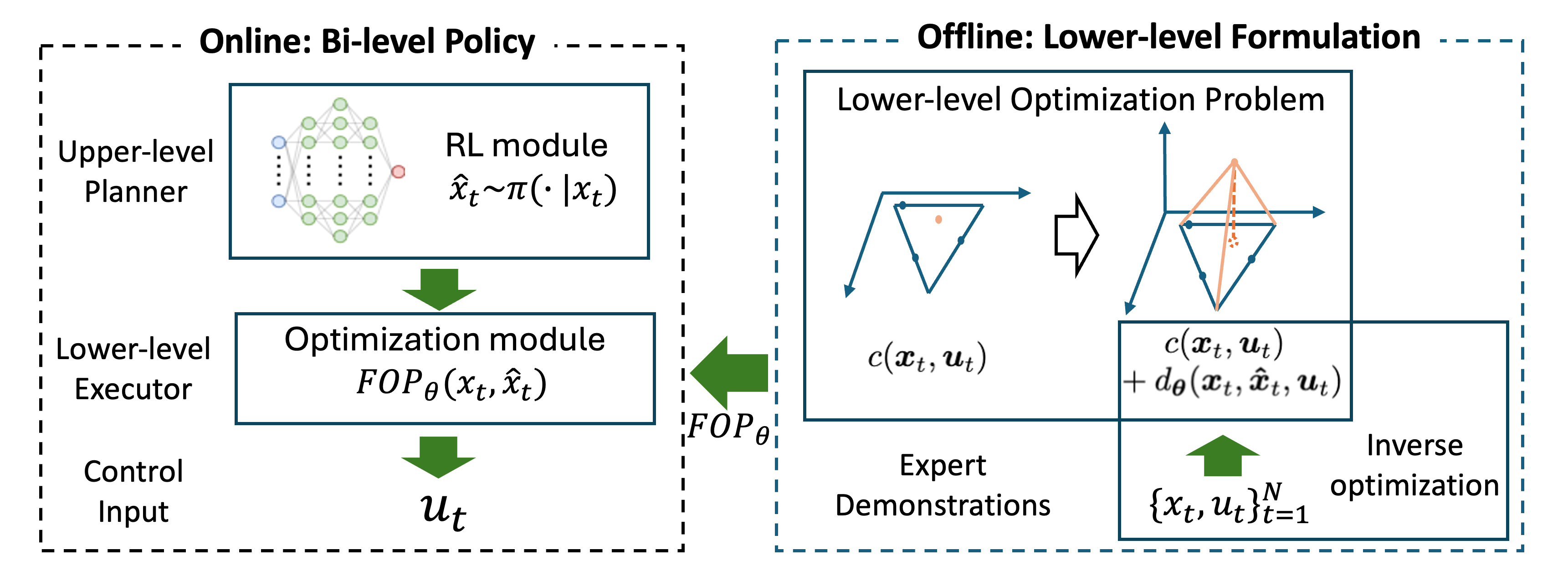}
    \caption{We propose an RL-OC hierarchical decision-making framework with lower-level policy informed by inverse optimization.}
    \label{fig:method}
\end{figure*}

Once the formulation is established, we develop computationally efficient methods to solve the resulting inverse problem and integrate the learned lower-level structure into a hierarchical RL–OC framework. We evaluate the proposed approach on three representative decision-making tasks from various fields: autonomous vehicle rebalancing, supply chain inventory management, and mobile robot navigation. The improvements in learning the formulation are validated from quantitative and qualitative perspectives. In light of the above discussion, we summarize the main contributions of this work as follows:\vspace{-0.6em}
\begin{itemize}
    \item We propose an inverse optimization-based approach to systematically construct the lower-level optimization formulation within a hierarchical RL-OC framework.
    \item We provide theoretical analysis for a special class of problems with broad applications, proposing a tractable cost structure and efficient inverse optimization formulation which ensures inverse-feasibility, forward-stability, and computational tractability.
    \item We demonstrate the effectiveness of the proposed framework on several scenarios from various fields, showcasing its practical relevance and potential impact in real-world applications.
\end{itemize}
\section{Related Work}
This work is related to the literature on hierarchical structured control policies. Depending on whether learning-based or model-based approaches are used at each level, existing works can be broadly classified into two categories: (i) hierarchical reinforcement learning (HRL) that employs RL at each level, and (ii) learning-based optimal control, e.g., frameworks integrating RL and Model Predictive Control (MPC), whereby a upper-level policy learns desired states or goals, and a simplified lower-level MPC ensures safe and feasible execution.

\noindent \textbf{Hierarchical Reinforcement Learning.}
HRL decomposes a complex, difficult-to-solve problem into multiple simpler, smaller problems by setting subgoals \citep{kulkarni2016hierarchical, vezhnevets2017feudal,ma2021hierarchical,xie2021hierarchical,eppe2022intelligent,qi2022hierarchical,huang2022lstm,gu2023safe,mao2024integrating,luo2024goal,zhang2024spatial,hirt2024stability}.  
We focus on how hierarchical policies utilize various forms of intrinsic motivation by setting subgoals. For example, \citet{naveed2021trajectory} uses a high-level policy to choose maneuvers for autonomous driving, while a low-level planner generates waypoints accordingly. \citet{vezhnevets2017feudal} proposes Feudal Networks, where a manager sets abstract goals that are conveyed to and enacted by the Worker module. Another common choice is to represent goals as desired states; e.g., \citet{nachum2018data} treats goal states as high-level actions and rewards the low-level policy for reaching them. In recent years, a growing body of work has investigated how to define subgoals and how to search efficiently within the subgoal space \citep{liu2021hierarchical, ma2023human}. Nevertheless, existing studies do not explicitly model how subgoals induce final actions via a well-defined lower-level optimization problem, often relying instead on implicit or black-box policy mappings. Moreover, as previously discussed, incorporating constraints to ensure safety strictly is challenging in reinforcement learning framework. 
In next section, we focus on how learning, especially reinforcement learning, interplays with optimization in the previous literature.


\noindent \textbf{Learning-based OC.} In the control community, various real-world control problems are solved by using learning-based OC. In many existing works, learning-based methods are often applied to learn cost functions or system dynamics \citep{lenz2015deepmpc,coulson2019data,hewing2020learning,dogan2023regret,zhang2024inverse,lu2024mpc,zhang2024inverse,dinkla2026closed}. 
However, solving optimal control problems in real time still poses challenges when long control horizons are used, due to the high dimensionality of variables and the complexity of constraints. 
Prior work has attempted to reduce the computational burden by approximating the long-horizon OC problem with a single-step formulation and learning terminal cost to alleviate myopic behavior \citep{abdufattokhov2021learning,alsmeier2024neural,zhang2024inverse}. 

However, learning a terminal cost alone can be insufficient when the horizon is aggressively shortened, as the terminal term would need to encode most of the long-term planning signal, which is often highly state-dependent, and may be difficult to represent. This motivates RL-OC frameworks. \citet{gammelli2023graph} propose to leverage reinforcement learning to learn upper-level actions to shorten the control horizon of network flow control problem. In this framework, reinforcement learning is applied to generate desired next states that guide the lower-level executor toward maximizing cumulative rewards. However, the lower-level optimization problem still hinges on myopic objectives and does not rely on an explicitly and appropriately designed formulation. In addition, \citet{schmidt2024offline} study hierarchical RL-OC in an offline setting and focus on generating upper-level subgoals. Their framework assumes that the offline data are optimal for the lower-level control problem, which can be sometimes unrealistic and leaves the design of the lower-level OC formulation unaddressed. 

Related to this line of research, our work investigates how to formulate the lower-level optimization problem in hierarchical RL-OC framework to alleviate sub-optimality issues stemming from structural myopia. 





\section{Methodology}
\subsection{Problem Setting and Preliminary}
Let us consider a general multi-step decision-making problem formulated in (\ref{eq:problem1}).
\begin{equation}
\begin{aligned}
     \min_{\{\bm{u}_t\}_{t=0}^\infty} &\limsup_{T \to \infty} \frac{1}{T} \sum_{t=0}^{T-1} c(\bm{x}_t, \bm{u}_t)\\
     \text{s.t.} \quad & \bm{x}_{t+1} = f(\bm{x}_t) + g(\bm{x}_t) \bm{u}_t,\quad \forall t \geq 0 \\
     & \bm{x}_t \in \mathcal{X}_t,\quad \forall t \geq 0\\
     &  \bm{u}_t \in \mathcal{U}_t,\quad \forall t \geq 0
\end{aligned}
\label{eq:problem1}
\end{equation}
where $\bm{x}_t \in \mathbb{R}^n$ is the system state at time step $t$, $\bm{u}_t \in \mathbb{R}^m$ is the control input, and $c(\bm{x}_t, \bm{u}_t)$ is the stage cost function. The system dynamics are assumed to be control-affine (i.e., linear in $\bm{u}_t$) with functions $f(\cdot)$ and $g(\cdot)$. The feasible sets $\mathcal{X}_t$ and $\mathcal{U}_t$ encode admissible states and inputs, respectively. The initial state $x_0$ is known.

\begin{remark}[Generality of problem setting] \label{rmk:1}
We consider the control-affine dynamics in~(\ref{eq:problem1}) for two reasons. First, control-affine models represent a fundamental and broad class of systems in cyber-physical applications~\citep{schmitz2025excitation, kazemian2024random}. Even when a system is not canonically control-affine, prior work has shown that, for some systems, control-affine approximations can still be leveraged to enable effective control~\citep{mao2017successive, mutha1997nonlinear}. Second, this choice allows us to isolate and highlight the proposed inverse-optimization design framework, rather than on the additional algorithmic challenge of solving inverse problems for more general lower-level programs. Importantly, the framework can be extended to non-control-affine dynamics, at the cost of employing more sophisticated inverse-problem solvers, which have been extensively studied in prior work~\citep{bertsimas2015data, schaefer2009inverse}. 
\end{remark}

Due to the infinite horizon and the presence of complex constraints, solving Problem (\ref{eq:problem1}) is often computationally intractable. 
A standard workaround is a finite-horizon approximation. However, OC problems with long time horizons can still be computationally challenging for large-scale systems, which do not satisfy the real-time requirements of practical applications. To address this issue, we generalize the bi-level decision-making framework proposed by \citet{gammelli2023graph} to a more general problem setting. 

\subsection{Bi-level Framework}
The hierarchical RL-OC framework is implemented as an end-to-end system shown in Problem (\ref{eq:grouped}). 
\begin{equation} 
    \begin{aligned}
\pi^* \in \arg &\min_{\pi \in \Pi} \mathbb{E}_\tau \left[ \sum_{t=0}^\infty \gamma^t c(\bm{u}_t, \bm{x}_t)\right]\\
\text{s.t.}\quad \bm{h}_t &\sim \pi(\bm{h}_t \mid \bm{x}_t)\\
\bm{u}_t &= \mathrm{FOP}(h_t, \bm{x}_t) 
\end{aligned}
\label{eq:grouped}
\end{equation}

The overall policy $\pi^*$ composes an upper-level RL policy $\pi$ and the solution to a lower-level optimization problem $\mathrm{FOP}$. 
The upper-level policy encodes task-relevant abstract information or goals to produce the intrinsic subgoal $\bm{h}_t$. The lower-level optimization module receives both the intrinsic subgoal $h_t$ and the current state $\bm{x}_t$ to compute the control input $\bm{u}_t$ that is feasible. 

\begin{remark}[Practical requirements for RL-OC frameworks]\label{rmk:1}
    There are two requirements for Problem~(\ref{eq:grouped}). First, for real-time deployment, $\mathrm{FOP}$ should capture operational constraints and be computationally efficient to solve. Second, to enable stable training, the intrinsic subgoal $\bm{h}_t$ is preferably low-dimensional. 
\end{remark}

Following Remark~\ref{rmk:1}, a commonly used form of subgoals is a linear transformation of the desired next state $\bm{x}_t^{\text{des}}$ using a known matrix $C$, denoted by $\bm{\hat{x}}_t=C\bm{x}_t^{\text{des}}$ \citep{gammelli2023graph, schmidt2024offline}. This subgoal guides the system toward the desired state by inducing appropriate lower-level actions $u_t$. Here, in scenarios with high state dimensions, the transformation matrix $C$ serves to compress the action space of the upper-level RL by mapping the high-dimensional system state to a lower-dimensional planning space. The transformation matrix $C$ is typically set as an identity matrix in scenarios with low state dimensions. 
Overall, the bi-level decision-making framework is given in (\ref{eq:grouped2}) and (\ref{eq:problem5}). 
\begin{subequations}
    \begin{align}
\pi^* \in \arg &\min_{\pi \in \Pi} \mathbb{E}_\tau \left[ \sum_{t=0}^\infty \gamma^t c(\bm{u}_t, \bm{x}_t) \right]\\
s.t.\quad \bm{\hat{x}}_t &\sim \pi(\bm{\hat{x}}_t \mid \bm{x}_t)\\
\bm{u}_t &\in \mathrm{FOP}(\bm{x}_t,\bm{\hat{x}}_t) 
\end{align} \label{eq:grouped2}
\end{subequations}
\begin{subequations}
\begin{align}
\mathrm{FOP}(\bm{x}_t,\bm{\hat{x}}_t) &:=\  \arg\min_{\bm{u}_t} \ c^{\mathrm{FOP}}(\bm{x}_t, \bm{u}_t) \\
\text{s.t.} \quad & Cf(\bm{x}_t) + Cg(\bm{x}_t) \bm{u}_t = \bm{\hat{x}}_t, \\
&f(\bm{x}_t) + g(\bm{x}_t) \bm{u}_t \in \mathcal{X}_{t+1},~ \bm{u}_t \in \mathcal{U}_t
\end{align}
\label{eq:problem5}
\end{subequations}
where $c^{\mathrm{FOP}}(\bm{x}_t, \bm{u}_t)$ denotes the objective of the lower-level optimization problem, which we assume to be convex to support real-time decision-making. Importantly, $c^{\mathrm{FOP}}(\bm{x}_t, \bm{u}_t)$ is not required to be the same as the overarching task stage cost $c(\bm{x}_t,\bm{u}_t)$; instead, it can be viewed as a convex surrogate, while any mismatch, e.g., due to an intrinsically non-convex \(c(\bm{x}_t,\bm{u}_t)\), can be learned from data.

By leveraging intrinsic subgoals learned by the upper-level policy, the lower-level executor is able to make near-instantaneous, subgoal-conditioned decisions, which is particularly advantageous in time-sensitive or high-dimensional environments. However, the formulation of FOP still remains ambiguous in the bi-level framework, as discussed in Remark~\ref{rmk:2}.

\begin{remark}[Importance of proper lower-level formulations] \label{rmk:2}
 We highlight two considerations for Problem (\ref{eq:problem5}). First, the transformed desired next state $\bm{\hat{x}}_t$ produced by the upper-level RL may be infeasible in practice, e.g., planned trajectories in robotics tasks, which require the design of a cost function to penalize violations. Such a cost function must be aligned with the overarching objective of the decision-making problem to avoid sub-optimality. 
 Second, even when $\hat{x}$ is feasible, the formulation of the lower-level optimization still requires careful design to mitigate myopic decisions, particularly when the transformation matrix $C$ reduces the state dimension. Specifically, there may exist multiple actions $\bm{u}_t$ that satisfy the constraint $Cf(\bm{x}_t)+Cg(\bm{x}_t)\bm{u}_t=\hat{\bm{x}}_t$ but yield different next states under the true dynamics, which will influence future rewards. Therefore, it is important to properly design the lower-level problem to align the action selection with the overarching objective.

\end{remark}

In the next subsection, we will present an inverse optimization approach to inform the design of the lower-level optimization problem.

\subsection{Learning Problem Formulation: Inverse Optimization}

\subsubsection{General Framework}

We assume access to a set of expert data $\{\bm{x}_t, \bm{u}_t\}_{t\in\mathcal{T}_{e}}$ generated by an existing decision-making policy, such as Model Predictive Control (MPC) or other high-quality heuristics ensuring strict safety guarantees, where $\mathcal{T}_e$ represents the set of time steps at which the dataset is collected. 
We make three remarks regarding the expert dataset. First, as high-quality solution methods may not be suitable for making real-time decisions in practice, the expert data can be derived offline rather than from real-world operations. Second, for the same reason, the dataset is assumed to be small, which makes supervised learning methods such as imitation learning less suitable. Third, $\mathcal{T}_e$ is chosen to cover diverse operating conditions and does not have to consist of consecutive time steps.



The core of our approach is to recover latent structure encoded in the expert data. Specifically, we parameterize the lower-level optimization problem as 
\begin{equation}
\begin{aligned}
  \mathrm{FOP}_{\bm{\theta}}(\bm{x}_t,\bm{\hat{x}}_t) :=& \arg\min_{\bm{u}_t} \{ \tilde{c}(\bm{x}_t, \bm{u}_t)+ d_{\bm{\theta}}(\bm{x}_t,\bm{\hat{x}}_t, \bm{u}_t)
\\ &\mid \bm{a}_t + B_t \bm{u}_t = \bm{\hat{x}}_t,  \bm{u}_t \in \widehat{\mathcal{U}}_t\} \label{eq:parameterization}  
\end{aligned}
\end{equation}

where $\bm{a}_t:=Cf(\bm{x}_t)$, $B_t:=Cg(\bm{x}_t)$, and $\widehat{\mathcal{U}}_t:=\text{co}\left(\mathcal{U}_t\cap \{\bm{u}_t|f(\bm{x}_t)+g(\bm{x}_t)\bm{u}_t\in\mathcal{X}_{t+1}\}\right)$, where $\text{co}(\cdot)$ represents the convex hull. Specifically, the term $\tilde{c}(\bm{x}_t, \bm{u}_t)$ corresponds to the exact stage cost $c(\bm{x}_t, \bm{u}_t)$ when it is convex; otherwise, it serves as a convex surrogate to preserve computational tractability. Typical choices for $\tilde{c}(\cdot)$ include quadratic approximations or convex envelopes. 
The term $d_{\bm{\theta}}(\cdot)$ parameterized by parameter $\bm{\theta}$ is also assumed to be convex and will be properly designed to align the lower-level optimization problem to the objective of the overarching decision-making problem. 

We then convert the design of the lower-level optimization problem into estimating $\bm{\theta}$ from the expert dataset. 
Speficically, we aim to find \(\bm{\theta}\) via inverse optimization such that for each expert pair $(\bm{x}_t,\bm{u}_t)$, the action $\bm{u}_t$ is (approximately) optimal for \(\mathrm{FOP}_{\bm{\theta}}(\bm{x}_t,\bm{\hat{x}}_t)\) under some subgoal~$\bm{\hat{x}}_t$. Specifically, let $\mathcal{U}^{\text{opt}}(\bm{\theta},\bm{x}_t,\bm{\hat{x}}_t)$ denote the optimal solution set corresponding to parameter $\bm{\theta}$, state $\bm{x}_t$, and subgoal $\bm{\hat{x}}_t$. 
In practice, since the lower-level optimization is convex, $\mathcal{U}^{\text{opt}}(\bm{\theta},\bm{x}_t,\bm{\hat{x}}_t)$ can be characterized by the Karush–Kuhn–Tucker (KKT) conditions. 
Then, the inverse optimization problem can be formulated as 
\begin{equation}
\min_{\bm{\theta}} \left\{ \kappa h(\bm{\theta}) + \frac{1}{|\mathcal{T}_e|} \sum_{t\in \mathcal{T}_e} \ell\left(\bm{u}_t, \mathcal{U}^{\text{opt}}(\bm{\theta},\bm{x}_t,\bm{\hat{x}}_t)\right) \;\middle|\; \bm{\theta} \in \bm{\Theta} \right\}. \label{eq:IO}
\end{equation}
where the objective function is a weighted combination of two parts: (i) the sum of losses with each loss $\ell(\bm{u}_t, \mathcal{U}_t^{\text{opt}}(\bm{\theta}))$ indicating the deviation of expert action $\bm{u}_t$ from the optimal solution set $\mathcal{U}_t^{\text{opt}}(\bm{\theta})$, i.e., the sub-optimality of the expert action, and (ii) a user-defined, application-specific regularization term $h(\bm{\theta})$ representing prior information or user preference regarding $\bm{\theta}$, with $\gamma$ as the weight. This formulation follows a standard data-driven inverse optimization paradigm~\citep{chan2025inverse}.

\begin{remark}[Inverse optimization for lower-level formulations]
We make three remarks regarding the inverse optimization framework. First, we focus on learning the objective function of $\mathrm{FOP}_{\bm{\theta}}$ rather than its constraints, as constraints are typically dictated by physical requirements. Second, without loss of generality, we assume the constraint $\bm{a}_t+B_t\bm{u}_t=\bm{\hat{x}}_t$ is feasible. If this is not the case, we can relax it as $\bm{a}_t+B_t\bm{u}_t=\bm{\hat{x}}_t+\bm{\epsilon}_t$, and augment the action, corresponding matrix, and cost function as $\bm{\tilde{u}}_t=[\bm{u}_t;\bm{\epsilon}_t],~\tilde{B}_t=[B_t, I]$, and $d_{\bm{\theta}}(\bm{x}_t,\bm{\hat{x}}_t,\bm{\tilde{u}}_t)$, respectively, which yields a lower-level optimizer of the same form. 
\end{remark}

Without loss of generality, we assume $\widetilde{\mathcal{U}}_t$ is a polytope represented by $\widetilde{\mathcal{U}}_t = \{u\mid H_t\bm{u}\leq \bm{b}\}$. Under this assumption, the learning problem for $\theta$ can be formulated as
\begin{subequations}
    \begin{align}
  &\min_{\bm{\theta},\{\bm{w}_t,\bm{\lambda}_t,\bm{\epsilon}_t\}_{t\in\mathcal{T}_e}} \quad \sum_{t\in \mathcal{T}_e}\Vert\bm{\epsilon}_t\Vert_2^2 +  \kappa h(\bm{\theta})  \label{eq:KKT_1}\\ 
  \text{s.t.}& \bm{0} \in \partial_{\bm{u}} \left(\tilde{c}(\bm{x}_t, \bm{u}_t) + d_{\bm{\theta}}(\bm{x}_t,\bm{\hat{x}}_t, \bm{u}_t)\right) + \bm{\lambda}_t^TH_t + \bm{w}_t^TB_t,~\forall t\label{eq:KKT_2}\\
  & \bm{a}_t + B_t \bm{u}_t = \bm{\hat{x}}_t,\quad H_t\bm{u}\leq \bm{b},~\forall t\label{eq:KKT_3} \\
  & \bm{\lambda}_t \geq 0,~\forall t \label{eq:KKT_4}\\ 
  & \varphi(\bm{u}_t,\bm{\lambda}_t,\bm{w}_t) + \bm{\epsilon}_t = c(\bm{x}_t, \bm{u}_t) + d_{\bm{\theta}}(\bm{x}_t,\bm{\hat{x}}_t, \bm{u}_t),~\forall t\label{eq:KKT_5}
\end{align} \label{eq:KKT}
\end{subequations}
where $\partial_{\bm{u}}$ denotes the subdifferential of the cost function of $\mathrm{FOP}_{\bm{\theta}}$, which generalizes the gradient to allow for non-smooth objectives. The vectors $\bm{w}_t$ and $\bm{\lambda}_t$ are dual variables. The objective function in (\ref{eq:KKT_1}) follows the structure of Problem (\ref{eq:IO}), penalizing the sum of squared duality gaps while regularizing $\bm{\theta}$ via $h(\cdot)$. Constraints  (\ref{eq:KKT_2})-(\ref{eq:KKT_5}) are the KKT conditions, where (\ref{eq:KKT_5}) relaxes the strong duality condition with a duality gap $\bm{\epsilon}_t$. 

Concretely, we minimize the duality gap in Problem~(\ref{eq:KKT}) instead of the distance between given expert decisions and optimal solutions to the forward optimization problem. This aims to mitigate the computational burden associated with solving mathematically challenging problems, for instance, when Problem~(\ref{eq:KKT}) is non-convex, which will be analyzed in more detail in the following sections. 


\subsubsection{Special Case with Theoretical Analysis}



To illustrate how inverse optimization serves as a principled guideline for designing lower-level formulations, we consider a special class of decision-making problems with a linear stage cost function $ \bm{c}^\top \bm{u}_t$ and state-independent function $g(\cdot)$ (i.e., $B_t=B,\forall t$).
We show that under this setting, our proposed cost structure can achieve desirable properties such as inverse feasibility, forward stability, and computation efficiency. At the end of this section, we generalize our conclusions to a broader class of problems with quadratic stage cost functions. 

We make the following remarks on this special setting. First, this setting captures a broad range of applications in resource allocation, logistics, and energy systems, where strict constraint satisfaction and fast computation are critical. 
Second, it is important to note that despite the linearity of the stage cost, the original multi-stage decision-making problem~(\ref{eq:problem1}) can still be challenging to solve in real-world operations due to (i) the potentially nonlinear structures of $f(\cdot)$ and $g(\cdot)$ and (ii) potentially high state and action dimensions. 


In this special setting, we aim to derive an FOP formulation such that the inverse optimization is always feasible. This is important, as it can allow us to utilize any type of expert data. 
In this case, we formulate the estimation of $\bm{\theta}$ as  
\begin{equation}
\min_{\bm{\theta}} \left\{ \varphi(\bm{\theta}) \;\middle|\; \bm{\theta} \in \bm{\theta}_t^{\text{inv}}(\bm{u}_t) \; \forall t \in \mathcal{T}_e,\; \bm{\theta} \in \bm{\Theta} \right\}.
\label{eq:problem8}
\end{equation}
where $\bm{\theta}_t^{\text{inv}}(\bm{u}_t) := \left\{ \bm{\theta} \;\middle|\; \bm{u}_t \in \mathcal{U}_t^{\text{opt}}(\bm{\theta}) \right\}$ is the inverse feasible set, i.e., the set of parameter values under which the expert action $\bm{u}_t$ belongs to the optimal solution set $\mathcal{U}_t^{\text{opt}}(\bm{\theta})$ of the lower-level problem $\mathrm{FOP}_{\bm{\theta}}(\bm{x}_t, \bm{\hat{x}}_t)$. The function $\varphi(\bm{\theta})$ denotes a user-defined, application-specific objective that resolves indeterminacy by selecting among multiple feasible parameters. 

As stated in Proposition~\ref{prp:1},
without a properly designed cost structure $d_{\bm{\theta}}(\cdot)$, the lower-level formulation lacks the expressivity to rationalize expert behaviors, which means there may exist no feasible upper-level subgoal $\hat{x}_t$ that renders the observed expert actions optimal.



\begin{proposition}[]\label{prp:1}
    The inverse feasible set for Problem (\ref{eq:problem5})\[
\bm{\hat{\mathcal{X}}}_t^{\text{inv}}(\bm{u}_t) := \left\{ \bm{\hat{x}_t} \;\middle|\; \bm{u}_t \in \mathcal{U}_t^{\text{opt}}(\bm{\hat{x}_t}) \right\}.
\] is not guaranteed to be non-empty, where $\mathcal{U}_t^{\text{opt}}(\bm{\hat{x}_t})$ is defined by the KKT conditions
 of Problem (\ref{eq:problem5}).
\end{proposition}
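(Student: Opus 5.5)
Since the statement is a non-existence claim ("not guaranteed to be non-empty"), I will prove it by exhibiting one counterexample inside the special setting: linear stage cost $c^{\mathrm{FOP}}(\bm{x}_t,\bm{u}_t)=\bm{c}^\top\bm{u}_t$ (no $d_{\bm{\theta}}$ term), constant $B$, and polyhedral $\widehat{\mathcal{U}}_t=\{\bm{u}\mid H\bm{u}\le \bm{b}\}$. The key step is to reduce membership $\bm{u}_t\in\mathcal{U}^{\text{opt}}_t(\hat{\bm{x}}_t)$ to an LP condition that does not depend on $\hat{\bm{x}}_t$. Any admissible subgoal must satisfy $\hat{\bm{x}}_t=\bm{a}_t+B\bm{u}_t$, since $\bm{u}_t$ has to be feasible. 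So the only candidate is $\hat{\bm{x}}_t^\star:=\bm{a}_t+B\bm{u}_t$, and the set $\hat{\mathcal{X}}^{\text{inv}}_t(\bm{u}_t)$ is either $\{\hat{\bm{x}}_t^\star\}$ or empty. By the KKT conditions of the LP, it is nonempty exactly when there exist $\bm{\lambda}\ge 0$ and $\bm{w}$ such that
\[
\bm{c}+H^\top\bm{\lambda}+B^\top\bm{w}=\bm{0},\qquad \bm{\lambda}^\top(H\bm{u}_t-\bm{b})=0 .
\]
Equivalently, $-\bm{c}$ must lie in the normal cone of $\widehat{\mathcal{U}}_t\cap\{\bm{u}\mid B\bm{u}=B\bm{u}_t\}$ at $\bm{u}_t$.

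The second step is a concrete instance, mirroring Remark~\ref{rmk:2}: when $C$ compresses the state, i.e.\ $B$ has a nontrivial null space, the subgoal cannot discriminate among actions in a fiber. Take $m=2$, $B=[1\;1]$ (so the subgoal fixes the total flow $u_1+u_2$), $\bm{c}=(1,2)^\top$, $\widehat{\mathcal{U}}_t=[0,1]^2$, and an expert action $\bm{u}_t=(0,1)^\top$. Such an action is plausible from an MPC expert that routes through the currently costlier arc because of future benefits. On the fiber $u_1+u_2=1$, the cost equals $1+u_2$, so the unique minimizer is $(1,0)$. Hence $(0,1)\notin\mathcal{U}^{\text{opt}}_t(\hat{\bm{x}}^\star_t)$, and the inverse feasible set is empty. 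I will also verify this through the KKT system for completeness. The active constraints at $(0,1)$ are $-u_1\le0$ and $u_2\le1$, so stationarity reads $1-\lambda_1+w=0$ and $2+\lambda_2+w=0$ with $\lambda_1,\lambda_2\ge0$. These give $\lambda_1=1+w$ and $w=-2-\lambda_2\le -2$, hence $\lambda_1\le -1<0$, a contradiction.

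The main obstacle is making the example faithful to Problem~(\ref{eq:problem5}) rather than a bare LP. I will instantiate the dynamics with $f(\bm{x})=\bm{x}$ and $g$ constant, choose $C$ so that $CB$ has a nontrivial null space, and take $\mathcal{X}_{t+1}$ large enough that it does not bind. The point to stress is that $\bm{u}_t$ is feasible, so emptiness comes purely from the objective's lack of expressivity. This is what motivates introducing $d_{\bm{\theta}}$.
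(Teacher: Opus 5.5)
Your proof is correct and uses the same mechanism as the paper's: write the KKT system of the linear-cost lower-level problem and show that stationarity fails because the compressed matrix $B$ cannot absorb $\bm{c}$. The paper argues generically, taking an expert action with no active inequalities so that $\bm{\lambda}_t=\bm{0}$ and emptiness reduces to $-\bm{c}\notin\operatorname{range}(B^\top)$; you instead give an explicit vertex instance where the obstruction persists even with inequality multipliers available, and you usefully make explicit that primal feasibility forces the only candidate subgoal to be $\hat{\bm{x}}_t=\bm{a}_t+B\bm{u}_t$.
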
 

To address the sub-optimality, we propose an inverse-optimization–guided design procedure that leverages criteria such as inverse feasibility and forward stability to inform the lower-level formulation.

\vspace{0.2em}\noindent \textbf{Stage 1: Validating optimality of expert decisions.} We first construct a preliminary formulation of $\mathrm{FOP}_{\bm{\theta}}$ such that the historical decisions $\{\bm{u}_t\}_{t=1}^T$ are possible to be optimal under the observed states $\{\bm{x}_t\}_{t=1}^T$. We achieve this goal by validating the feasibility of the inverse problem. 

We select the terminal cost $d_{\bm{\theta}}(\cdot)$ represented by the summation of ReLU-based regularization terms, which demonstrates high efficiency despite its simplicity. Propositions~\ref{pro:pro3.6} and~\ref{pro:pro3.7} show the motivation to design such cost structure from both algebraic and geometric perspectives respectively.
  \begin{equation}
  \label{eq:regu}
\begin{aligned}
\text{FOP}(\bm{x}_t, \bm{\hat{x}}_t) :=  \arg\min_{\bm{u}_t} \{ &\bm{c}^\top \bm{u}_t + \sum_{k=1}^K \max\{\bm{\theta}^T_k\bm{u}_t-\nu_k, 0\} \\&\big|  \bm{a}_t + B \bm{u}_t = \bm{\hat{x}}_t, \bm{u}_t \in \widetilde{\mathcal{U}}_t \}
\end{aligned}
\end{equation} 
\begin{proposition}
    The inverse feasible set for Problem (\ref{eq:regu}), $\bm{\theta}_t^{*\text{inv}}(\bm{u}_t) := \left\{ \bm{\theta} \;\middle|\; \bm{u}_t \in \mathcal{U}_t^{*\text{opt}}(\bm{\theta}) \right\}$, is always non-empty, where $\mathcal{U}_t^{*\text{opt}}(\bm{\theta})$ is defined by KKT conditions of Problem (\ref{eq:regu}).
    \label{pro:pro3.6}
\end{proposition}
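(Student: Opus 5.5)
The approach is constructive. For each expert pair $(\bm{x}_t,\bm{u}_t)$ I will exhibit parameters $\{\bm{\theta}_k,\nu_k\}_{k=1}^K$, together with dual multipliers, satisfying the KKT conditions of Problem~(\ref{eq:regu}). The subgoal is fixed as $\bm{\hat{x}}_t := \bm{a}_t + B\bm{u}_t$, so the equality constraint holds trivially. Since the expert action is admissible, $\bm{u}_t\in\widetilde{\mathcal{U}}_t$, and primal feasibility is automatic.

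The objective is convex and piecewise linear. The feasible set is a polytope intersected with an affine subspace, so the KKT conditions are necessary and sufficient, and no constraint qualification is needed beyond linearity. It therefore suffices to find $\bm{\theta}$ with
\[
\bm{0}\in \bm{c} + \sum_{k=1}^K \partial\max\{\bm{\theta}_k^\top\bm{u}_t-\nu_k,0\} + H_t^\top\bm{\lambda}_t + B^\top \bm{w}_t,
\]
with $\bm{\lambda}_t\ge 0$ and complementary slackness on the active rows of $H_t\bm{u}\le \bm{b}$. The subdifferential of the $k$-th ReLU term at $\bm{u}_t$ is $\{\bm{\theta}_k\}$, $\{0\}$, or $\{s\bm{\theta}_k: s\in[0,1]\}$, according as $\bm{\theta}_k^\top\bm{u}_t-\nu_k$ is positive, negative, or zero.

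The key construction uses a single active term ($K\ge 1$; any extra terms are made inactive by taking $\nu_k$ large). Set $\bm{\lambda}_t=\bm{0}$, $\bm{w}_t=\bm{0}$, $\bm{\theta}_1 := -\bm{c}$, and $\nu_1 := \bm{\theta}_1^\top\bm{u}_t - \delta$ for any $\delta>0$, so the first term is strictly active with gradient $-\bm{c}$. Stationarity then holds exactly: $\bm{c}+\bm{\theta}_1 = \bm{0}$. For the remaining terms, choose $\nu_k > \bm{\theta}_k^\top\bm{u}_t$ so their subgradient is $0$. The objective is then constant on the region where the active term stays positive, so $\bm{u}_t$ is a (possibly non-unique) minimizer. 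The construction works equally well with $\nu_1 = \bm{\theta}_1^\top\bm{u}_t$ (kink exactly at $\bm{u}_t$) and $s=1$. Either way, the KKT system has a solution, so the inverse-feasible set is nonempty.

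The main subtlety is the role of $\bm{\theta}\in\bm{\Theta}$ and the interpretation of $\bm{\theta}$ as possibly shared across time steps. The statement is per-$t$, so the per-$t$ construction suffices. I would also remark that a shared construction exists: $\bm{\theta}_1=-\bm{c}$ with $\nu_1$ below $\min_t \bm{\theta}_1^\top \bm{u}_t$ works simultaneously for all $t\in\mathcal{T}_e$, since $\nu_1$ does not depend on $t$. This contrasts with Proposition~\ref{prp:1}, where the absence of $d_{\bm{\theta}}$ forces $\bm{c}\in -\mathrm{range}(H_t^\top\text{ active}) - \mathrm{range}(B^\top)$, which need not hold. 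If the trivial "flat objective" solution is considered degenerate, I would instead use the dual freedom: choose $\bm{\theta}_1$ in the relative interior of the normal-cone directions and let the ReLU kink at $\bm{u}_t$ absorb the residual $-\bm{c} - H_t^\top\bm{\lambda}_t - B^\top\bm{w}_t$. Verifying that this residual lies in $\{s\bm{\theta}_1\}$ is immediate by defining $\bm{\theta}_1$ to be that residual.
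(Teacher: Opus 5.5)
Your construction is correct and is essentially the paper's own argument: the paper also sets $\bm{\lambda}_t=\bm{0}$, makes the ReLU pieces active (in its epigraph form, $z_{kt}=1$, $y_{kt}=0$), picks the $\bm{\theta}_k$ so that $\bm{c}+\sum_k\bm{\theta}_k+B^\top\bm{w}_t=\bm{0}$, and takes $\nu_k=\min_t \bm{\theta}_k^\top\bm{u}_t$ so that one parameter vector serves every $t\in\mathcal{T}_e$. The differences are cosmetic: you keep only one active term (with $\bm{w}_t=\bm{0}$ and the other terms deactivated) and work with subdifferentials rather than the lifted KKT system, and you also state explicitly why KKT is sufficient here, which the paper leaves implicit.
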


\begin{proposition}
    By appropriately enlarging the decision space with auxiliary variables in Problem (\ref{eq:regu}), any given set of expert decisions can be made optimal in the lifted space.
    \label{pro:pro3.7}
\end{proposition}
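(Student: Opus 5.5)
Proposition~\ref{pro:pro3.7} is the geometric counterpart of Proposition~\ref{pro:pro3.6}, so the plan is to make the hinge terms explicit as auxiliary variables and read off optimality from the lifted linear program. For each $k$ we introduce an epigraph variable $s_k\ge 0$ with $s_k \ge \bm{\theta}_k^\top\bm{u}_t-\nu_k$. Problem~(\ref{eq:regu}) then becomes the LP
\begin{equation*}
\min_{\bm{u}_t,\bm{s}} \Big\{ \bm{c}^\top\bm{u}_t + \bm{1}^\top\bm{s} \;\Big|\; \bm{a}_t+B\bm{u}_t=\bm{\hat{x}}_t,\; H_t\bm{u}_t\le \bm{b},\; \bm{s}\ge \Theta\bm{u}_t-\bm{\nu},\; \bm{s}\ge \bm{0}\Big\},
\end{equation*}
where $\Theta$ stacks the rows $\bm{\theta}_k^\top$. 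We first check that the projection of its optimal set onto $\bm{u}_t$ is exactly the optimal set of (\ref{eq:regu}); this holds because at any optimum $s_k=\max\{\bm{\theta}_k^\top\bm{u}_t-\nu_k,0\}$. This equivalence is routine. The claim then reduces to showing that, for every expert pair, there is an objective-aligned choice of $(\Theta,\bm{\nu})$ making $(\bm{u}_t,\bm{s}_t)$ a vertex/face optimum of the lifted polyhedron.

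Fix an expert action $\bm{u}_t$ that is feasible for the constraints of (\ref{eq:regu}). The geometric idea is to rotate the lifted objective $(\bm{c},\bm{1})$ into the normal cone of the lifted feasible region at $(\bm{u}_t,\bm{s}_t)$. We would choose hinge hyperplanes passing through $\bm{u}_t$, setting $\nu_k=\bm{\theta}_k^\top\bm{u}_t$ so that every hinge is active at $\bm{u}_t$ and $\bm{s}_t=\bm{0}$. The hinge constraints contribute normals $(\bm{\theta}_k,-\bm{e}_k)$ to the lifted cone. The LP optimality (KKT) condition at $(\bm{u}_t,\bm{0})$ requires multipliers $\bm{\mu}\in[0,1]^K$, $\bm{\lambda}\ge 0$ and $\bm{w}$ with
\begin{equation*}
\bm{c}+\sum_k \mu_k\bm{\theta}_k + H_t^\top\bm{\lambda} + B^\top\bm{w}=\bm{0}.
\end{equation*}
The bound $\mu_k\le 1$ comes from the $\bm{s}$-stationarity. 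Taking $K=1$, $\mu_1=1$, $\bm{\lambda}=\bm{0}$, $\bm{w}=\bm{0}$ and $\bm{\theta}_1=-\bm{c}$ satisfies this, mirroring the algebraic construction behind Proposition~\ref{pro:pro3.6}. For a whole dataset $\{\bm{u}_t\}_{t\in\mathcal{T}_e}$ we would use one hinge per datum, or piecewise-linear cuts, so that the lifted region has each $(\bm{u}_t,\bm{s}_t)$ on an optimal face. The point to emphasise is that the lifting turns $\bm{u}_t$, which need not be a vertex in the original space, into a point lying on a face whose normal cone, once enlarged by the hinge normals, contains the cost direction.

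The main obstacle is the dataset-wide version. For a single datum the construction is trivial, but with shared $(\Theta,\bm{\nu})$ across different states, hinges placed through one $\bm{u}_t$ may be inactive at another $\bm{u}_{t'}$. Those hinges then either add slope $\bm{\theta}_k$ there or add nothing, which spoils the normal-cone condition at $\bm{u}_{t'}$. To handle this we would first exploit the freedom in $\bm{\hat{x}}_t$, which can always be set to $\bm{a}_t+B\bm{u}_t$; this makes $\bm{u}_t$ feasible and adds $\operatorname{range}(B^\top)$ to the cone. We would then try to pick hinges with $\bm{\theta}_k=-\bm{c}$ and offsets $\nu_k\le\min_t \bm{\theta}_k^\top\bm{u}_t$, so that the hinge is active on every datum. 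On that region the lifted objective becomes flat, and every feasible point — hence every expert decision — is optimal. If flatness is deemed degenerate, the fallback is to add one auxiliary coordinate per datum, via hinges on $\bm{u}_t$-specific lifted copies, giving a genuinely higher-dimensional lifted space in which each expert point is an exposed face.
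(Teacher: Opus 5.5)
Your proposal is correct and follows essentially the paper's route: lift each hinge into an epigraph variable, make the added constraint active at every expert point so that all of them sit on one face of the lifted polyhedron, and tilt that face parallel to the objective's level sets. That tilt is exactly your choice $\bm{\theta}_1=-\bm{c}$, $\nu_1\le\min_t\bm{\theta}_1^\top\bm{u}_t$, and your KKT certificate with $\mu_1=1$ makes the paper's geometric ``parallel face'' argument explicit.

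One small caution: flatness holds for a single hinge, or more generally when $\sum_k\bm{\theta}_k=-\bm{c}$ as in Proposition~\ref{pro:pro3.6}. If instead you set every $\bm{\theta}_k=-\bm{c}$ with $K\ge 2$ hinges active on all data, the objective on that region has slope $(1-K)\bm{c}$ and is not flat.
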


\vspace{0.2em}\noindent \textbf{Stage 2: Ensuring forward stability.} 
    Although Problem (\ref{eq:regu}) has proven to make expert data optimal, i.e., the inverse feasibility is guaranteed, another critical issue in inverse optimization, named forward stability, proposed by \citet{shahmoradi2022quantile}, is not guaranteed in the case of a linear program. The definition of forward stability is defined as follows.
\begin{definition}[Forward Instability \citep{shahmoradi2022quantile}]
\label{def:forward-instability}
Given a set of expert observations $\hat{\mathcal{U}}$, the \emph{forward instability} of an inverse solution $\hat{\bm{\theta}} \in \bm{\theta}^{*\text{inv}}(\hat{\mathcal{U}})$ is defined as
\begin{align}
\max_{u \in \mathcal{U}^{*opt}(\hat{\bm{\theta}})} \left\{ d(\hat{\mathcal{U}}, u) \right\},
\end{align}
where $\mathcal{U}^{*opt}(\hat{\bm{\theta}})$ denotes the set of forward optimal solutions corresponding to $\hat{\bm{\theta}}$. This value quantifies the worst-case distance between a forward solution $u$ induced by $\hat{\bm{\theta}}$ and the expert data $\hat{\mathcal{U}}$, measuring how unstable the inverse solution $\hat{\bm{\theta}}$ can be.
\end{definition}

 To solve this issue, we include small quadratic terms into objective function to ensure strong convexity of objective function, such that forward stability is improved and the expert decisions are approximately optimal at the same time. Overall, we formulate the forward optimization problem as follows.
\begin{equation}
\begin{aligned}
\text{FOP}(\bm{\hat{x}}_t, \bm{x}_t) := \ & \arg\min_{\bm{u}_t}\{\bm{c}^\top \bm{u}_t + \sum_{k=1}^K \max\{\bm{\theta}^\top_k \bm{u}_t - \nu_k, 0\} \\
&+ l \|\bm{u}_t\|_2^2\ \big|\ \bm{a}_t + B \bm{u}_t = \bm{\hat{x}}_t, \bm{u}_t \in \widetilde{\mathcal{U}}_t\}
\end{aligned}
\label{eq:fop}
\end{equation}
We propose the following inverse optimization formulation, where dual variables are denoted by $z_{kt},y_{kt}, \bm{w}_t, \bm{\lambda}_t$ and $\tau_{kt}$ is the auxiliary variable. $g(\bm{\lambda}_t, \bm{z}_t, \bm{y}_t, \bm{w}_t)$ is the dual objective function with explicit form in this problem. To keep the formulation concise, we omit the explicit form of this function. 

\begin{equation}\label{eq:problem10}
\begin{aligned}
&\min_{\substack{z_{kt}, y_{kt}, \bm{\lambda}_t, \bm{w}_t, \\ \bm{\epsilon}_t, \bm{\theta}_k,\nu_k}} \quad \sum_{t \in \mathcal{T}_e} \|\bm{\epsilon}_t\|_2^2 + \rho \sum_k \|\bm{\theta}_k\|_2^2 \\
\text{s.t.} \quad 
& 1 - z_{kt} - y_{kt} = 0, \quad k = 1,\dots,K,\ t \in \mathcal{T}_e \\
& \bm{c}^\top + \bm{\lambda}_t^\top H + \bm{w}_t^\top B \\
& \qquad + \sum_k z_{kt}\bm{\theta}_k^\top + 2l\bm{u}_t = 0, \quad t \in \mathcal{T}_e \\
& g(\bm{\lambda}_t, \bm{z}_t, \bm{y}_t, \bm{w}_t) + \epsilon_t = \bm{c}^\top \bm{u}_t \\
& \qquad + \sum_{k=1}^K \tau_{kt} + l\bm{u}_t^\top \bm{u}_t, \quad t \in \mathcal{T}_e \\
& \tau_{kt} \geq \bm{\theta}^\top_k \bm{u}_t - \nu_k, \quad k = 1,\dots,K,\ t \in \mathcal{T}_e \\
& z_{kt}, y_{kt}, \bm{\lambda}_t, \tau_{kt} \geq 0, \quad k = 1,\dots,K,\ t \in \mathcal{T}_e
\end{aligned}
\end{equation}

The problem is non-convex including bi-linear terms in constraints which poses significant challenges for computational tractability. We minimize duality gap together with regularizing norm of $\bm{\theta}_k$ for robustness. For solving Problem (\ref{eq:problem10}), it can be reformulated as a Mixed Integer Program and solved using the spatial branch-and-bound algorithm within a reasonable time \citep{smith1999symbolic}. By minimizing the duality gap instead of the distance between optimal solutions and given expert solutions, the number of bilinear terms is reduced from $\mathcal{O}(|\mathcal{A}|K+|\mathcal{A}|^2)$ to $\mathcal{O}(|\mathcal{A}|K)$. Thus, the number of constraints introduced by the construction of the McCormick envelope can be significantly reduced. 

What's more, Proposition~\ref{pro:pro3.9} demonstrates that the distance between the input and optimal solutions remains bounded. By introducing a sufficient number of ReLU-based terms, we can constrain the values of $\epsilon_0$ within a small range, thereby controlling the upper bound of the distance. 



\begin{proposition}
    Problem (\ref{eq:problem10}) yields optimal solutions $\bm{u}_t^*$ satisfying \( |f(\bm{u}_t^*) - f(\bm{u}_t)| \leq \epsilon_0 \),  s.t.
\begin{align}
\|\bm{u}_t^* - \bm{u}_t\| \leq \sqrt{\frac{\epsilon_0}{l}}
\end{align}
where $\epsilon_0=\max_t\bm{\epsilon}_t^*$ and $\bm{u}_t$ is the given expert decision in time step $t$.
\label{pro:pro3.9}
\end{proposition}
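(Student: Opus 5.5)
The argument rests on the strong convexity that the term $l\|\bm{u}_t\|_2^2$ gives the forward objective in (\ref{eq:fop}). Write
\[
f(\bm{u}) := \bm{c}^\top \bm{u} + \sum_{k=1}^K \max\{\bm{\theta}_k^\top \bm{u} - \nu_k, 0\} + l\|\bm{u}\|_2^2 ,
\]
using the parameters $(\bm{\theta}_k,\nu_k)$ returned by Problem (\ref{eq:problem10}). Let $\mathcal{F}_t := \{\bm{u} \mid \bm{a}_t + B\bm{u} = \bm{\hat{x}}_t,\ H\bm{u} \le \bm{b}\}$ be the feasible set, which is convex and contains the expert action $\bm{u}_t$.

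The linear and max-of-affine parts of $f$ are convex, and the quadratic term makes $f$ $2l$-strongly convex. Hence $f$ has a unique minimizer $\bm{u}_t^*$ over $\mathcal{F}_t$. This uniqueness already answers the forward-instability concern of Definition~\ref{def:forward-instability}, since the set $\mathcal{U}^{*opt}$ is a singleton.

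\textbf{Step 1 (quadratic growth).} By optimality of $\bm{u}_t^*$, there is a subgradient $\bm{s} \in \partial f(\bm{u}_t^*)$ with $\bm{s}^\top(\bm{u} - \bm{u}_t^*) \ge 0$ for all $\bm{u} \in \mathcal{F}_t$. Combining this with the strong-convexity inequality
\[
f(\bm{u}) \ge f(\bm{u}_t^*) + \bm{s}^\top(\bm{u} - \bm{u}_t^*) + l\|\bm{u} - \bm{u}_t^*\|^2
\]
at $\bm{u} = \bm{u}_t$ gives
\[
f(\bm{u}_t) - f(\bm{u}_t^*) \ge l\|\bm{u}_t - \bm{u}_t^*\|^2 .
\]
The modulus here is $l$, because the Hessian of $l\|\bm{u}\|^2$ is $2lI$.

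\textbf{Step 2 (suboptimality bound).} We need $0 \le f(\bm{u}_t) - f(\bm{u}_t^*) \le \epsilon_0$. The statement allows this to be taken as the hypothesis $|f(\bm{u}_t^*) - f(\bm{u}_t)| \le \epsilon_0$. I would still justify it from (\ref{eq:problem10}), because that is what connects $\epsilon_0$ to the inverse problem.

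The constraints of (\ref{eq:problem10}) are dual feasibility for the forward problem, written in epigraph form with $\tau_{kt}$, multipliers $z_{kt}, y_{kt} \ge 0$ and $z_{kt} + y_{kt} = 1$. Since $\bm{u}_t$ is primal feasible, weak duality gives
\[
g(\bm{\lambda}_t, \bm{z}_t, \bm{y}_t, \bm{w}_t) \le f(\bm{u}_t^*) \le f(\bm{u}_t) \le \bm{c}^\top \bm{u}_t + \sum_k \tau_{kt} + l\|\bm{u}_t\|^2 .
\]
The last inequality holds because $\tau_{kt}$ upper-bounds the ReLU term. The gap constraint then yields $f(\bm{u}_t) - f(\bm{u}_t^*) \le \epsilon_t^* \le \epsilon_0$.

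\textbf{Step 3.} Dividing Step 1 by $l$ and taking square roots gives $\|\bm{u}_t^* - \bm{u}_t\| \le \sqrt{\epsilon_0/l}$, as claimed.

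The main obstacle is Step 2. One issue is that the dual function $g$ is left implicit, so weak duality must be verified for the specific Lagrangian, including the quadratic term; with $l > 0$, minimizing the Lagrangian in $\bm{u}$ is well defined. A second issue is a possible inconsistency in (\ref{eq:problem10}): the stationarity condition contains $2l\bm{u}_t$, evaluated at the data point. If $g$ is meant as a Wolfe-type dual evaluated at $\bm{u}_t$, I would instead argue directly from the Lagrangian lower bound. If that route fails, I fall back on the hypothesis stated in the proposition.
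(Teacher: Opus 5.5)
Your proof is correct and follows the paper's argument: quadratic growth $f(\bm{u}_t)-f(\bm{u}_t^*)\ge l\|\bm{u}_t-\bm{u}_t^*\|^2$ from strong convexity and first-order optimality at $\bm{u}_t^*$, then weak duality plus the gap constraint to get $f(\bm{u}_t)-f(\bm{u}_t^*)\le\epsilon_0$; the paper phrases the first step with $\nabla f$, stationarity and complementary slackness, and your subgradient variational inequality is the cleaner form given the ReLU terms. The worry in your Step 2 is not an obstacle, since stationarity at $\bm{u}_t$ and convexity of the Lagrangian in $\bm{u}$ make $g$ equal to the true dual function, and the one piece you omit (the paper's preliminary degree-of-freedom argument that Problem (\ref{eq:problem10}) is feasible) is not needed for the bound, which holds at every feasible point of (\ref{eq:problem10}).
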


For the broader class of control problems  with quadratic objective functions, we can still infer parameters as shown in Problem (\ref {eq:problem10}). Proposition~\ref{pro:pro3.9} can be extended as shown below. 
\begin{corollary}
    When the objective function is given by $\bm{u}^TR\bm{u}+\bm{x}^TQ\bm{x}$ with the assumption $R\succ 0$, the upper bound will be achieved by $\sqrt{\frac{\lambda_{min}(R)}{l}}$, where $\lambda_{min}(R)$ denotes the minimum eigenvalue of $R$.
\end{corollary}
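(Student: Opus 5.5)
The plan is to obtain the corollary by composing Proposition~\ref{pro:pro3.9} with a bound on the optimal duality gap $\epsilon_0$ that holds when the stage cost is quadratic. Proposition~\ref{pro:pro3.9} rests on one mechanism, which I would reuse unchanged. Let $F(\bm{u})$ be the forward objective with the quadratic term $\bm{u}^\top R\bm{u}$ in place of $\bm{c}^\top\bm{u}$, plus the ReLU terms and $l\|\bm{u}\|_2^2$ (the $\bm{x}^\top Q\bm{x}$ term is constant in $\bm{u}_t$ and drops out). Then $F$ is $2l$-strongly convex, and in fact $2(\lambda_{\min}(R)+l)$-strongly convex, though I only need the $l$ part. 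Hence for the forward optimizer $\bm{u}_t^*$ the first-order condition over $\widetilde{\mathcal{U}}_t\cap\{\bm{a}_t+B\bm{u}=\bm{\hat{x}}_t\}$ gives
\[
F(\bm{u}_t)-F(\bm{u}_t^*)\ \ge\ l\,\|\bm{u}_t-\bm{u}_t^*\|_2^2 .
\]
By weak duality, the left-hand side is at most the duality gap $\epsilon_t\le\epsilon_0$ certified by the inverse problem (the analogue of Problem~(\ref{eq:problem10}) with $\bm{c}^\top\bm{u}$ replaced by $\bm{u}^\top R\bm{u}$). This yields $\|\bm{u}_t^*-\bm{u}_t\|\le\sqrt{\epsilon_0/l}$, exactly as in Proposition~\ref{pro:pro3.9}.

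The new ingredient is to show that, in the quadratic setting, the optimal gap satisfies $\epsilon_0\le\lambda_{\min}(R)$. Substituting this into the bound above gives the stated $\sqrt{\lambda_{\min}(R)/l}$. I would work under the normalization implicit in the paper: decisions scaled so that the feasible slice $\{\bm{u}\in\widetilde{\mathcal{U}}_t:\bm{a}_t+B\bm{u}=\bm{\hat{x}}_t\}$ has diameter at most $1$. To bound $\epsilon_0$, I would exhibit one feasible point of the inverse problem and evaluate its gap, since the optimum can only be smaller.

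The construction proceeds as follows. Proposition~\ref{pro:pro3.6} guarantees a choice of $(\bm{\theta}_k,\nu_k)$, with associated multipliers, under which $\bm{u}_t$ is optimal for the \emph{linearized} objective $2\bm{u}_t^\top R\bm{u}+\sum_k\max\{\bm{\theta}_k^\top\bm{u}-\nu_k,0\}$. Proposition~\ref{pro:pro3.7} supplies enough ReLU pieces to do this simultaneously for all $t\in\mathcal{T}_e$. With these multipliers, the only mismatch between primal and dual values comes from the curvature of the quadratic term along feasible directions $\bm{d}=\bm{u}-\bm{u}_t$. I would then choose the ReLU slopes to absorb the component of $R$ above $\lambda_{\min}(R)I$, i.e.\ use the ReLU pieces as a piecewise-linear majorant of $\bm{d}^\top(R-\lambda_{\min}(R)I)\bm{d}$ on the unit-diameter slice. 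The residual gap is then at most $\lambda_{\min}(R)\|\bm{d}\|^2\le\lambda_{\min}(R)$, which gives $\epsilon_t\le\lambda_{\min}(R)$ for every $t$ and hence $\epsilon_0\le\lambda_{\min}(R)$.

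I expect this second step to be the main obstacle. There are two points to justify. First, the piecewise-linear ReLU terms must actually be able to absorb the $R-\lambda_{\min}(R)I$ part while keeping the stationarity and complementarity constraints of the inverse problem satisfied. If exact absorption fails, I would instead directly take the dual point of the linear case, with the $2l\bm{u}_t$ stationarity term, and bound the gap by $\max_{\|\bm{d}\|\le1}\bm{d}^\top R\bm{d}$ restricted to the null space of $B$, arguing that the ReLU directions can be aligned with the top eigenvectors. Second, the normalization of the feasible-set diameter must be stated explicitly, since it is what makes the expression $\sqrt{\lambda_{\min}(R)/l}$ scale-consistent.
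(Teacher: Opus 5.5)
Your first step is sound and is exactly the argument of Proposition~\ref{pro:pro3.9}. The second step, $\epsilon_0\le\lambda_{\min}(R)$, carries all the content of your reading of the corollary, and it does not go through, for four reasons.

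\textbf{The optimum is not ``only smaller''.} $\epsilon_0=\max_t\epsilon_t^*$ is read off the \emph{optimizer} of the inverse problem, whose objective is $\sum_t\epsilon_t^2+\rho\sum_k\|\bm\theta_k\|_2^2$. A feasible point with small gaps only gives $\epsilon_0^2\le\sum_t\epsilon_t^2+\rho\sum_k\|\bm\theta_k\|_2^2$ evaluated at that point. The optimizer may trade larger gaps for smaller $\|\bm\theta_k\|$.

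\textbf{The gap contains no curvature term.} Stationarity is imposed exactly at the expert action $\bm u_t$, and the Lagrangian is convex, so $\bm u_t$ minimizes it. Hence $\epsilon_t$ equals the complementary-slackness residual $\bm\lambda_t^\top(\bm h-H\bm u_t)+\sum_k\bigl[z_{kt}(\tau_{kt}-\bm\theta_k^\top\bm u_t+\nu_k)+y_{kt}\tau_{kt}\bigr]$. No term $\bm d^\top R\bm d$ appears, and adding convex ReLU pieces to the objective cannot cancel any part of $R$ anyway.

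\textbf{Propositions~\ref{pro:pro3.6} and~\ref{pro:pro3.7} do not give a common $\bm\theta$ making every $\bm u_t$ stationary.} The construction in Proposition~\ref{pro:pro3.6} works because the linear cost $\bm c$ is the same for all $t$. Here the linear term is $2(R+lI)\bm u_t$, which changes with $t$. Had such a $\bm\theta$ existed, the gap would be zero and your bound moot.

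\textbf{The inequality is false in general.} Take $R=\mathrm{diag}(r,M)$, a box of unit diameter, a subgoal constraint acting only on the first coordinate, and one expert action interior in the second coordinate. As $\rho$ grows, the optimal $\bm\theta_k\to\bm 0$ and $\epsilon_0$ tends to a positive multiple of $M+l$, which exceeds $r=\lambda_{\min}(R)$ once $M$ is large. Your fallback would likewise produce the largest eigenvalue of $R$ on $\ker B$, not $\lambda_{\min}(R)$. Finally, the unit-diameter normalization is not in the paper. You needed it only because $\sqrt{\lambda_{\min}(R)/l}$ is a ratio of two curvature constants, so it cannot by itself bound a distance in $\bm u$-space.

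That last point shows what you missed. The paper gives no separate proof of the corollary. The only derivation its machinery supports is the one you set aside with ``I only need the $l$ part''. With $R\succ0$, $F-(\lambda_{\min}(R)+l)\|\bm u\|_2^2=\bm u^\top(R-\lambda_{\min}(R)I)\bm u+\sum_k\max\{\bm\theta_k^\top\bm u-\nu_k,0\}$ is convex, so $F$ is $2(\lambda_{\min}(R)+l)$-strongly convex. Rerunning the proof of Proposition~\ref{pro:pro3.9} with $\mu=2(\lambda_{\min}(R)+l)$ in place of $\mu=2l$ gives
\[
\|\bm u_t^*-\bm u_t\|\le\sqrt{\frac{\epsilon_0}{\lambda_{\min}(R)+l}}\le\min\Bigl\{\sqrt{\tfrac{\epsilon_0}{l}},\ \sqrt{\tfrac{\epsilon_0}{\lambda_{\min}(R)}}\Bigr\}.
\]
So $\lambda_{\min}(R)$ enters through the strong-convexity modulus, not through a bound on $\epsilon_0$. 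The printed $\sqrt{\lambda_{\min}(R)/l}$ is best read as shorthand for this substitution. Under that reading, your first paragraph run with the full modulus is already the complete proof, and the second step should be dropped.
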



\section{Case Study}
In this section, we evaluate the proposed framework through three case studies from different fields: (1) autonomous vehicle rebalancing; (2) inventory management in multi-warehouse multi-retailer supply chain, and (3) mobile robot navigation. The results highlight its ability to enhance decision quality and provide interpretable solutions in dynamic environments. The details of the three environments are given in Appendix~\ref{app:case studies}.

\subsection{Experimental Setup}
\textbf{Data Collection:} We generate expert demonstrations by simulating a full-horizon MPC controller over a finite horizon \( T \). The dataset consists of state-action pairs collected offline, designed to cover diverse operating conditions for each distinct task. Crucially, we utilize only a handful of these demonstrations to infer the formulation of the lower-level optimization policy via inverse optimization

\textbf{RL Implementation:} We employ A2C with Graph Neural Networks in the hierarchical framework for the AV and Supply Chain tasks, and Soft Actor-Critic (SAC) for the Mobile Robot task. Detailed network architectures and hyperparameters are provided in Appendix~\ref{app:c11}.

\textbf{Oracle and Baselines:} 
We use \textbf{MPC} with long-time horizon as an Oracle, and baselines include:
\begin{itemize}
    \item \textbf{Bi-level-original}: RL-OC framework where lower-level optimization policy is unchanged with original single-step cost \citep{gammelli2023graph}. 
    \item \textbf{End-to-end RL}: End-to-end reinforcement learning that trains a single model to directly map inputs to the final control actions.
    \item \textbf{Inverse MPC}: Identify the optimal cost function that minimizes the discrepancy between the actual trajectory and expert demonstration \citep{zhang2024inverse}.
    \item \textbf{S-type Heuristic}: Given the widespread use of established heuristics such as $(S, s)$ and $(Q, r)$ policies in inventory management, we specifically include a representative heuristic baseline to evaluate our framework's performance.
    \item \textbf{Value Approximation}: Quadratic terminal cost estimation via value function~\citep{abdufattokhov2021learning}, whose performance will be analyzed in the Appendix~\ref{app:c31}.
\end{itemize}


\subsection{Model Evaluation}


We compare the RL-OC framework equipped with a learned lower-level policy against variant baselines. Table~\ref{tab:2} and~\ref{tab:combine} summarize the comparative results. First, our hierarchical RL-OC framework achieves a marked improvement, which highlights the effectiveness applying RL-OC framework to mitigate the curse of dimensionality by compressing the original action space into lower-dimensional subgoals. Building on the hierarchical structure, our Bi-level-learned approach improves upon the Bi-level-original baseline by employing a refined lower-level formulation. Additionally, the framework significantly outperforms Inverse MPC baselines, confirming that our ReLU-based cost structure informed by inverse optimization provides a more principled design than the standard quadratic forms used in prior works. 

\begin{table}[h!]
\centering
\caption{Performance Comparison on Supply Chain Inventory Management.}
\resizebox{\columnwidth}{!}{
\label{tab:im_results}
\begin{tabular}{@{}lll@{}}
\toprule
\textbf{Method} & \textbf{Reward} & \textbf{Served Demand} \\
\midrule
MPC & 11335 ($\pm$34.3) & 1337 ($\pm$5.61) \\
\arrayrulecolor{black}
\hdashline[3pt/2pt] 
\noalign{\vskip 2pt} 
End-to-end RL & 2764 ($\pm$90.9) & 476 ($\pm$18.4) \\
Inverse MPC & 5608 ($\pm$31.8) & 643 ($\pm$7.04)\\
S-type Heuristic & 2709 ($\pm$ 195.6) & 601 ($\pm$ 12.5)\\
Bi-level-original & 7491 ($\pm$26.5) & 778 ($\pm$2.62) \\
Bi-level-learned (Ours) & \textbf{9442} ($\pm$42.3) & \textbf{930} ($\pm$1.17) \\
\bottomrule
\end{tabular}}
\label{tab:2}
\end{table}
\begin{table*}[t!]
  \centering
  \caption{Performance Comparison on AV Rebalancing and Mobile Robot Navigation tasks}
  \label{tab:integrated_results}
  \small
  \begin{tabular}{lccccccc}
    \toprule
    \multirow{2.5}{*}{\textbf{Method}} & \multicolumn{2}{c}{\textbf{AV Rebalancing}} & & \multicolumn{3}{c}{\textbf{Mobile Robot Navigation}} \\
    \cmidrule(lr){2-3} \cmidrule(lr){5-7}
    & \textbf{Reward $\uparrow$} & \textbf{Served Demand $\uparrow$} & & \textbf{Time (s) $\downarrow$} & \textbf{Length (m) $\downarrow$} & \textbf{Energy (J) $\downarrow$} \\
    \midrule
    MPC (Oracle) & 35725 ($\pm$41.6) & 3203 ($\pm$3.07) & & 3.50 & 3.50 & 4.81 \\
    \midrule
    End-to-end RL & 20989 ($\pm$213.5) & 2334.8 ($\pm$14.9) & & $7.60 \pm 0.11$ & $4.29 \pm 0.02$ & $6.78 \pm 0.23$ \\
    Inverse MPC & 32426 ($\pm$401.2) & 2996 ($\pm$28.6) & & * & * & * \\
    Bi-level-original & 33406 ($\pm$128.1) & 2993 ($\pm$8.06) & & $9.14 \pm 0.08$ & $4.35 \pm 0.02$ & $6.20 \pm 0.13$ \\
    \textbf{Bi-level-learned (Ours)} & \textbf{35019} ($\pm$53.0) & \textbf{3141} ($\pm$6.14) & & $\mathbf{4.40 \pm 0.10}$ & $\mathbf{4.14 \pm 0.07}$ & $\mathbf{1.52 \pm 0.06}$ \\
    \bottomrule
  \end{tabular}
  \begin{flushleft}
    \footnotesize{Note: Upward ($\uparrow$) and downward ($\downarrow$) arrows indicate that higher or lower values are preferred, respectively. Asterisks (*) denote collisions where the agent fails to reach the goal. MPC serves as the Oracle.}
  \end{flushleft}
  \label{tab:combine}
\end{table*}
The qualitative advantages of our framework are further illustrated in Figure~\ref{fig:mobile} below and Figure~\ref{fig:two_tasks} in Appendix C.2.2 , which compare the system trajectories across different tasks. By leveraging inverse optimization, the proposed framework effectively recovers the latent behavioral patterns in expert demonstrations. As a result, the trajectories generated by our Bi-level-learned method exhibit significantly higher alignment with the expert (MPC) benchmarks compared to the Bi-level-original baseline.

\begin{figure*}[h]
    \centering
    \begin{subfigure}[t]{0.33\linewidth}
        \centering
        \includegraphics[width=\linewidth]{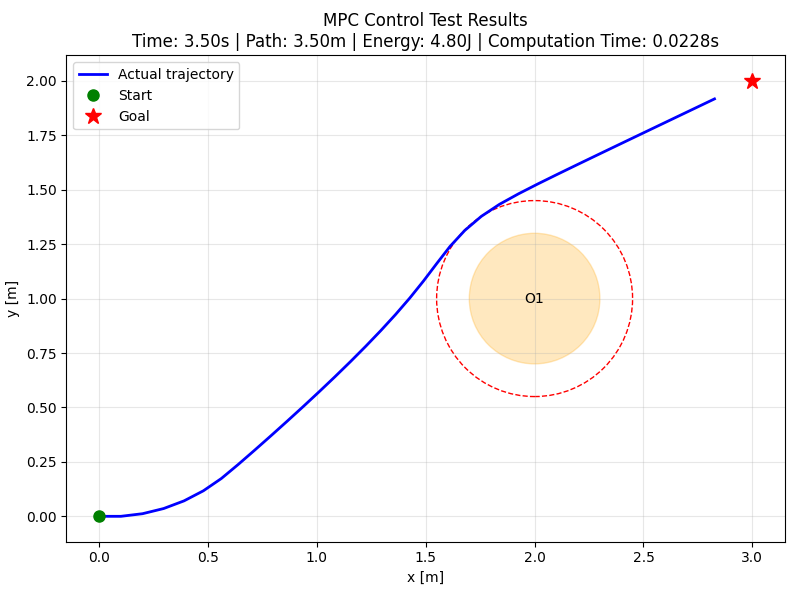}
        \caption{Expert demonstrations (MPC)}
        \label{fig:mobile_mpc}
    \end{subfigure}\hfill
    \begin{subfigure}[t]{0.33\linewidth}
        \centering
        \includegraphics[width=\linewidth]{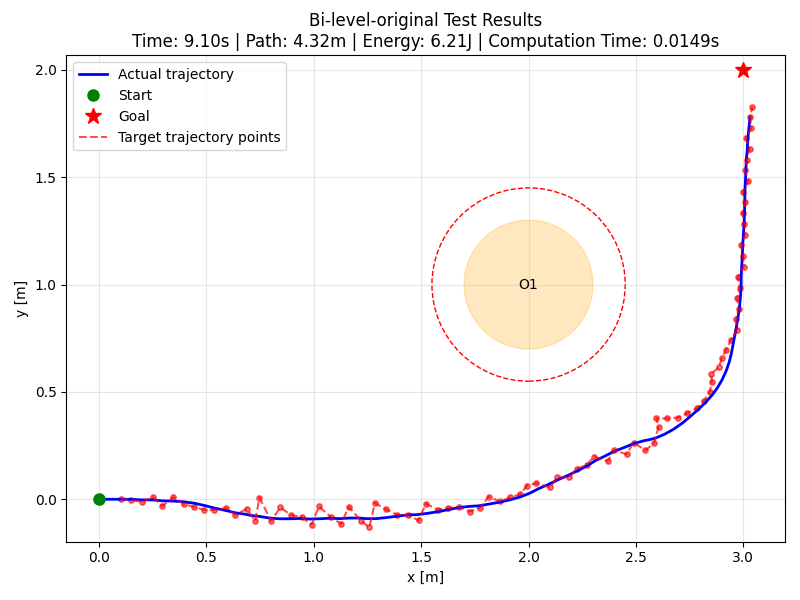}
        \caption{Bi-level-original}
        \label{fig:mobile_ori}
    \end{subfigure}\hfill
    \begin{subfigure}[t]{0.33\linewidth}
        \centering
        \includegraphics[width=\linewidth]{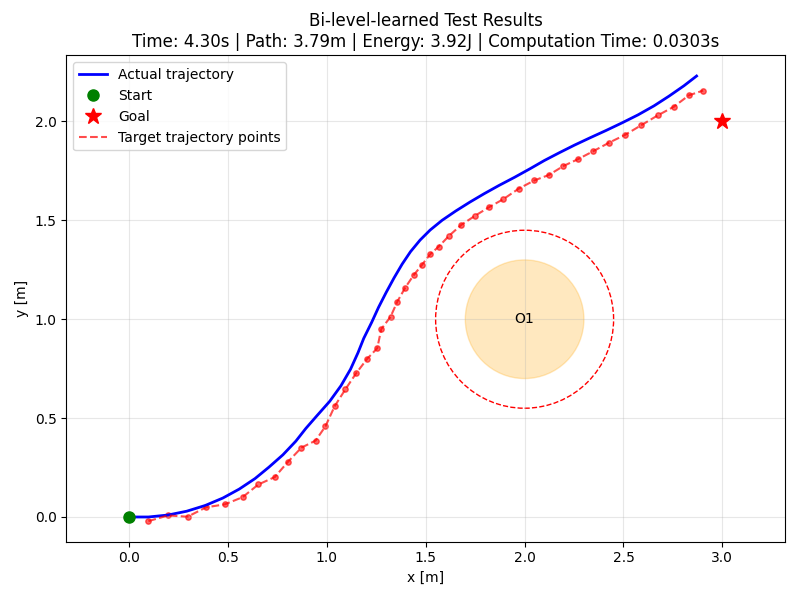}
        \caption{Bi-level-learned}
        \label{fig:mobile_learned}
    \end{subfigure}
    \caption{Trajectory comparison for the Mobile Robot Navigation task across different methods}
    \label{fig:mobile}
\end{figure*}

Moreover, compared with MPC, the bi-level framework shortens the planning horizon from T steps to one step, which significantly reduces decision-making time. We compare the time spent to make decisions for one step in Table~\ref{tab:time_across_tasks}. It shows a significant reduction in runtime which indicates that our method can significantly improve computational efficiency without substantially compromising solution quality.
\begin{table}[h!]
  \centering
  \caption{Comparison of Computational Time Across Scenarios}
  \label{tab:time_across_tasks}
  \begin{tabular}{lcc}
    \toprule
    \multirow{2}{*}{\textbf{Scenarios}} & \multicolumn{2}{c}{\textbf{Runtime (s)}} \\ \cmidrule(lr){2-3}
    & \textbf{Our Method} & \textbf{MPC} \\
    \midrule
    AV Rebalancing & \textbf{0.025} & 1.44  \\
    Inventory Management           & \textbf{0.034} & 0.082  \\
    Mobile Robot Navigation  & \textbf{0.0187} & 0.0275  \\
    \bottomrule
  \end{tabular}
\end{table}
These results highlight the effectiveness of optimizing the lower-level problem structure in enhancing overall system performance. Besides, scalability analysis, sensitivity analysis, interpretations of learned cost structure and motivations of proposed methods are provided and discussed in the Appendix ~\ref{app:experiment}.

\section{Limitations and Future Work}
While the developed hierarchical RL-OC scheme provides an effective solution for real-time, safety-critical control tasks, it presents certain limitations that motivate our future research directions.

First, our methodology is primarily evaluated on tasks with structured state representations. To broaden its real-world applicability, we intend to scale the proposed framework to larger-scale systems with complex dynamics, and to expand its scope to applications with unstructured inputs. A key advantage of our hierarchical RL-OC architecture is its potential to bridge high-dimensional perception with structured control, where the upper-level RL learns to extract latent representations and generate low-dimensional subgoals, while the lower-level optimization then strictly enforces physical constraints while optimizing for consistency with these learned representations and long-term objectives. Second, the quality of the solutions to the inverse optimization depends on the quality of the provided expert demonstrations. Although our empirical sensitivity analysis demonstrates reasonable robustness to data noise, learning from highly sub-optimal data remains challenging. We aim to conduct a more rigorous investigation into handling noisy or sub-optimal expert demonstrations. A natural extension is to develop noise-aware inverse formulation.

\section{Conclusion}
In this work, we propose a hierarchical RL-OC framework that addresses key challenges in real-time, safety-critical decision-making scenarios, where the upper-level policy generates subgoals that abstract planning objectives, while the lower-level controller executes these subgoals through a structured optimization problem. A central contribution of this work is the principled construction of the lower-level optimization formulation via inverse optimization, rather than relying on heuristic design. By recovering the latent cost structure from expert demonstrations, we ensure that the lower-level controller is aligned with the overarching objectives. Empirical evaluations across diverse domains demonstrate that our framework not only outperforms strong baselines in decision quality but also provides interpretable control policy. This work highlights a promising direction for combining learning-based goal abstraction with structured optimization. In particular, our study positions inverse optimization as an effective bridge between reinforcement learning and optimal control.

\section*{Acknowledgements}This research was supported by the Singapore Ministry of Education (MOE) under its Academic Research Fund Tier 1 (A-8001183-00-00).
\section*{Impact Statement}
This paper presents a methodological advance in integrating reinforcement learning and optimal control by leveraging inverse optimization to construct structured policies. There are many potential societal consequences of our work, particularly in improving the efficiency and safety of cyber-physical systems, including transportation, power grids, and robotics. Overall, we don't foresee any specific societal consequences of this work that necessitate a separate discussion.






\bibliography{reference}
\bibliographystyle{icml2026}

\newpage
\appendix
\onecolumn

\section{Proof of Propositions}
\subsection{Proposition 3.5}
\label{app:proof_proposition1}
    The inverse feasible set for Problem (\ref{eq:problem5})\[
\bm{\hat{\mathcal{X}}}_t^{\text{inv}}(\bm{u}_t) := \left\{ \bm{\hat{x}_t} \;\middle|\; \bm{u}_t \in \mathcal{U}_t^{\text{opt}}(\bm{\hat{x}_t}) \right\}.
\] is not guaranteed to be non-empty, where $\mathcal{U}_t^{\text{opt}}(\bm{\hat{x}_t})$ is defined by the KKT conditions
 of Problem (\ref{eq:problem5}).
\begin{proof}
We explore whether there exists $\bm{\theta}$ belonging to the inverse feasible set for Problem (\ref{eq:problem5}), which is making $\bm{u}_t$ to satisfy all KKT conditions.
\begin{itemize}
    \item KKT conditions for Problem (\ref{eq:problem5}):
    \begin{enumerate}
        \item Stationary Conditions: 
        \begin{equation*}
            \begin{aligned}
        \bm{c}^T+\bm{\lambda}_t^TH+\bm{w}_t^TB = 0\\
            \end{aligned}
        \end{equation*}
        \item Complementary Slackness:
        \begin{equation*}
            \begin{aligned}
                \bm{\lambda}_t^T(H\bm{u}_t-h)&=0
            \end{aligned}   
        \end{equation*}
        \item Dual Feasibility and Primal Feasibility:
        \begin{equation*}
        \begin{aligned}
            \bm{\lambda}_t\geq 0\\
            H\bm{u}_t\leq h
        \end{aligned}         
        \end{equation*}      
    \end{enumerate}
\end{itemize}
We assume $H\bm{u}_t\leq h$ is not active, which means $\bm{\lambda}_t = 0$. The system $B^T\bm{w}_t = -c$ has no feasible solution if the vector $-c$ does not lie in the column space of $B^T$, for example, when the system is overdetermined. Specifically, the overdetermined nature of the system arises from our design, where the sub-goal space is intentionally compressed relative to the action space.
\end{proof}

\subsection{Proposition 3.6}
    The inverse feasible set for Problem (\ref{eq:regu}) \[
\bm{\theta}_t^{*\text{inv}}(\bm{u}_t) := \left\{ \bm{\theta} \;\middle|\; \bm{u}_t \in \mathcal{U}_t^{*\text{opt}}(\bm{\theta}) \right\}.
\] is always non-empty, where $\mathcal{U}_t^{*\text{opt}}(\bm{\theta})$ is defined by KKT optimality conditions of Problem (\ref{eq:regu}).

\label{app:proof_proposition2}
\begin{proof}
We explore whether there exists $\bm{\theta}$ belonging to the inverse feasible set for Problem (\ref{eq:regu}), which is making $\bm{u}_t$ to satisfy all KKT conditions.
\begin{itemize}
    \item KKT conditions for Problem \ref{eq:regu}:
    \begin{enumerate}
        \item Stationary Conditions: 
        \begin{equation*}
            \begin{aligned}
        \bm{c}^T+\sum_{k=1}^Kz_{kt}\bm{\theta}_k^T+\bm{\lambda}_t^TH+\bm{w}_t^TB = 0,\quad for\ t \in \mathcal{T}_e\\
        1-z_{kt}-y_{kt} = 0,\quad for\ k=1,...K\ and\ t \in \mathcal{T}_e
            \end{aligned}
        \end{equation*}
        \item Complementary Slackness:
        \begin{equation*}
            \begin{aligned}
                z_{kt}(\tau_{kt}-\bm{\theta}_k^T\bm{u}_t+\nu_k) &= 0,\quad for\ k=1,...K\ and\ t \in \mathcal{T}_e\\
                y_{kt}\tau_{kt}&=0,\quad for\ k=1,...K\ and\ t \in \mathcal{T}_e\\
                \bm{\lambda}_t^T(H\bm{u}_t-h)&=0,\quad for\ t \in \mathcal{T}_e
            \end{aligned}   
        \end{equation*}
        \item Dual Feasibility and Primal Feasibility:
        \begin{equation*}
        \begin{aligned}
            z_{kt},\ y_{kt},\ \bm{\lambda}_t\geq 0,\quad for\ k=1,...K\ and\ t \in \mathcal{T}_e\\
            H\bm{u}_t\leq h,\quad for\ t \in \mathcal{T}_e\\
            \tau_{kt}\geq \bm{\theta}_k^T\bm{u}_t-\nu_k, \tau_{kt}\geq 0,\quad for\ k=1,...K\ and\ t \in \mathcal{T}_e
        \end{aligned}         
        \end{equation*}      
    \end{enumerate}
\end{itemize}
There exist trivial feasible solutions to satisfy all KKT conditions:\\
We assume $z_{kt}=1$, $y_{kt}=0$ and $\bm{\lambda}_t = \bm{0}$. Then the set of KKT conditions are transformed into 
\begin{equation}
    \begin{aligned}
    &\bm{c}^T+\sum_{k=1}^K\bm{\theta}_k^T+\bm{w}_t^TB = 0,\quad for\ t \in \mathcal{T}_e\\
        &\tau_{kt}= \bm{\theta}_k^T\bm{u}_t-\nu_k,\ \tau_{kt}\geq 0,\quad for\ k=1,...K\ and\ t \in \mathcal{T}_e
    \end{aligned}
\end{equation}
We must find $\{\bm{\theta}_k\}_{k=1}^K$ to satisfy $\bm{c}^T+\sum_{k=1}^K\bm{\theta}_k^T+\bm{w}_t^TB = 0$ and let $\nu_k=\min_{t=1}^T\{\bm{\theta}_k^T\bm{u}_t\}$
\end{proof}

\subsection{Proposition 3.7}
    By appropriately enlarging the decision space with auxiliary variables in Problem (\ref{eq:regu}), any given set of 
expert decisions can be made optimal in the lifted space.
\begin{proof}
Consider a collection of expert decisions $\{u_i\}_{i=1}^N$ with $u_i \in \mathbb{R}^n$. Problem~(\ref{eq:regu}) can be reformulated as a linear program by introducing auxiliary variables $\{t_k\}_{k=1}^K$, which effectively enlarges the decision space from $\mathbb{R}^n$ to $\mathbb{R}^{n+K}$. For clarity, let us focus on the case $K=1$. By adding the constraints 
\begin{align}
t_k \;\geq\; \bm{\theta}_k^\top u_i - \nu,
\end{align}
We can at least guarantee that all expert decisions $\{u_i\}_{i=1}^N$ lie on the same $n$-dimensional face of the feasible polytope in the augmented space by making the new added constraint active. This face can be made parallel to the hyperplane defined by the objective function by adjusting the new added constraint. 

This situation, however, represents an extreme case. In practice, the set $\{u_i\}_{i=1}^N$ may lie within a face of dimension strictly smaller than $n$. To illustrate, suppose $u_i \in \mathbb{R}^2$. If all expert decisions lie on a one-dimensional face (an edge) of the feasible polytope, then they remain on a corresponding one-dimensional face in the lifted $\mathbb{R}^3$ space. Assume that this edge arises as the intersection of two adjacent facets, whose supporting hyperplanes have normal vectors $\vec{n}_1$ and $\vec{n}_2$, respectively. The direction vector of the edge is then given by 
\begin{align}
\vec{d}_{\text{edge}} = \vec{n}_1 \times \vec{n}_2,
\end{align}
since it is simultaneously orthogonal to both $\vec{n}_1$ and $\vec{n}_2$. 

For the expert decisions to be optimal, it suffices that the objective hyperplane be parallel to this edge. Equivalently, the edge direction $\vec{d}_{\text{edge}}$ must be orthogonal to the objective normal $\vec{n}_{\text{obj}}$, i.e.,
\begin{align}
\vec{d}_{\text{edge}} \cdot \vec{n}_{\text{obj}} = 0.
\end{align}
This condition establishes the required relationship between the optimal geometry and the objective orientation. Thus, we can solve the equation for feasible parameters $\theta$ in the terminal cost. 
\end{proof}

\subsection{Proposition 3.9}
\label{app:proof_proposition5}
Problem (\ref{eq:problem10}) yields optimal solutions $\bm{u}_t^*$ satisfying \( |f(\bm{u}_t^*) - f(\bm{u}_t)| \leq \epsilon_0 \),  s.t.
\begin{align}
\|\bm{u}_t^* - \bm{u}_t\| \leq \sqrt{\frac{\epsilon_0}{l}}
\end{align}

\begin{proof}
First, we show that the feasible region of Problem (\ref{eq:problem10}) is non-empty.
Since the strong duality constraint has been relaxed, the main task reduces to verifying the stationarity condition
\begin{align}
\bm{c}^T+\bm{\lambda}_t^TH+\bm{w}_t^TB+\sum_k z_{kt}\bm{\theta}_k^T+2l \bm{u}_t=0, \qquad t=1,\ldots,T.
\end{align}
For the $i$-th expert decision, suppose there are $n_i$ active inequality constraints. Each such active constraint introduces one additional degree of freedom in the stationarity condition. Together with the $N$ equality constraints, the total number of effective constraints across $T$ time periods is 
\begin{align}
\sum_{i=1}^T n_i + NT.
\end{align}
Since the decision variable $c$ lies in $\mathbb{R}^M$, the stationarity condition can be satisfied provided that the number of auxiliary terms is large enough to compensate for the remaining degrees of freedom. More precisely, the required number of ReLU terms is bounded above by
\begin{align}
\max\!\left\{0,\, T-\frac{\sum_{i=1}^T n_i + NT}{M}\right\}.
\end{align}
This establishes the feasibility of the relaxed problem. Next, we derive the upper bound of distance.

Based on strong convexity of $f(u)$, we have 
\begin{equation}
f(\bm{u}_t) \geq f(\bm{u}_t^*) + \nabla f(\bm{u}_t^*)^\top (\bm{u}_t - \bm{u}_t^*) + \frac{\mu}{2} \|\bm{u}_t - \bm{u}_t^*\|^2
\end{equation}
where $\nabla^2 f(\bm{u}_t)\geq \mu I>0$. 

By stationary condition $\nabla f(\bm{u}_t^*) + \lambda_t^{*T}H + w_t^{*T}B = 0$ and complementary slackness condition $\lambda^{*T}_t(Hu_t^*-h_t)=0$,
\begin{align*}
        \nabla f(\bm{u}_t^*)^\top (\bm{u}_t - \bm{u}_t^*) =& (-\bm{\lambda}^{*T}_tH - \bm{w_t}^{*T}B)(\bm{u}_t - \bm{u}_t^*) = -\bm{\lambda_t}^{*T}H(\bm{u}_t - \bm{u}_t^*)\\
        =&\bm{\lambda}_t^{*T}(\bm{h_t}-H\bm{u}_t)\\
        \geq& 0
\end{align*}
Thus, 
\begin{equation}
f(\bm{u}_t) - f(\bm{u}_t^*) \geq \frac{\mu}{2} \|\bm{u}_t - \bm{u}_t^*\|^2
\end{equation}
The dual objective function $g(\bm{\lambda}_t, \bm{z}_t, \bm{y}_t, \bm{w}_t) \leq f(\bm{u}_t^*) \leq f(\bm{u}_t)$ and we have  
    $f(\bm{u}_t) - g(\bm{\lambda}_t, \bm{z}_t, \bm{y}_t, \bm{w}_t) \leq \epsilon_0$ by constraining strong duality gap, s.t.
\begin{equation}
    f(\bm{u}_t) - f(\bm{u}_t^*)\leq \epsilon_0
\end{equation}
Thus, we have
\begin{equation}
    \|\bm{u}_t - \bm{u}_t^*\|^2 \leq \epsilon_0*\frac{2}{\mu} = \frac{\epsilon_0}{l}
\end{equation}
    \label{lem:lemma1}
\end{proof}

\section{Details of Case Studies}
\label{app:case studies}
We consider three simulation scenarios described as follows:

\noindent \textbf{Autonomous Vehicle Rebalancing.} Autonomous Mobility-on-Demand (AMoD) systems are an evolving mode of transportation in which a centrally coordinated fleet of self-driving vehicles dynamically serves travel requests. In real-world systems, the effectiveness of rebalancing strategies is central to the overall system performance, with sub-optimal strategies potentially exacerbating congestion through unnecessary trips or increased passenger waiting time. The control of these systems is typically formulated as a large network optimization problem. This framework comprises two stages: (1) determining the desired distribution of idle vehicles $\hat{\bm{q}}_t$ through the use of the learned policy $\pi_\phi(\hat{\bm{q}}_t|s_t)$ by reinforcement learning, (2) converting this distribution to a passenger flow $g_{ij}^t$ and rebalancing flow $f_{ij}^t$ by solving a linear control problem. 

\noindent \textbf{Supply Chain Inventory Management.} In the supply chain inventory management task, we aim to determine the optimal ordering and distribution strategies across a network of interconnected multiple warehouses and multiple retail stores to satisfy customer demand while minimizing storage, production and transportation costs. We choose upper-level goals as (i) desired production in warehouse nodes $\hat{w}_i^t$ and (ii) desired inventory in store nodes $\hat{q}_i^t$. And the lower-level optimization module determines the amount of commodities $\bm{w}_i^t$ to order in each warehouse, and the shipping flow $f_{ij}^t$ from warehouses to stores. 

\noindent \textbf{Mobile Robot Navigation.} In the mobile robot navigation task, a ground robot moves from a start location to a target destination while avoiding static obstacles. The control problem is typically formulated in a hierarchical framework. At the high level, a learned policy $\pi_{\bm{\theta}}(\hat{p}_t\mid s_t)$ outputs intermediate waypoints, conditioned on the robot’s state, the goal position, and obstacle information. At the low level, these actions are converted into executable control inputs $(v_t, \omega_t)$ for the unicycle dynamics, ensuring feasibility under kinematic and dynamic constraints.

The detailed formulations of the lower-level optimization problems for these three environments are given as follows.

\subsection{Autonomous Vehicles Rebalancing}
The lower-level optimization policy is represented in a matrix form as follows. 
\begin{equation}
\begin{aligned}
    \min_{\bm{f}_t, \bm{g}_t} \quad & \bm{c}^\top \bm{f}_t - \bm{p}^\top \bm{g}_t + \sum_{k=1}^K \max \{ -{\bm{\theta}_f^{(k)}}^\top \bm{f}_t + {\bm{\theta}_g^{(k)}}^\top \bm{g}_t - \mu^{(k)}, 0 \} \\
    \text{s.t.} \quad & \bm{q}_t - A(\bm{f}_t + \bm{g}_t) \geq \hat{\bm{q}}_t \\
    & \bm{q}_t - G(\bm{f}_t + \bm{g}_t) \geq \bm{0} \\
    & \bm{f}_t \geq \bm{0}, \quad \bm{g}_t \geq \bm{0} \\
    & \bm{g}_t \leq \bm{\lambda}_t
\end{aligned}
\label{eq:reb}
\end{equation}
In\eqref{eq:reb}, the vectors $\bm{c}$ and $\bm{p}$ characterize the unit operational cost and unit profit, respectively. The constraints incorporate two critical matrices: the incidence matrix $A$, which governs the flow conservation and node-wise distribution updates, and the topology matrix $G$, which ensures the total outgoing flow does not exceed the available local supply. The term $\bm{\lambda}_t$ captures the real-time travel requests. Beyond the original stage cost, the parameters $\{\bm{\theta}_f^{(k)}, \bm{\theta}_g^{(k)}, \mu^{(k)}\}_{k=1}^K$ define a structured strategic cost learned via inverse optimization from expert demonstrations. 
\begin{proposition}
    The inverse feasible set for Problem \ref{eq:reb} \begin{align}
\bm{\theta}_t^{*\text{inv}}(\mathbf{f}_t, \mathbf{g}_t) := \left\{ \bm{\theta} \;\middle|\; \mathbf{f}_t, \mathbf{g}_t \in \mathcal{X}_t^{*\text{opt}}(\bm{\theta}) \right\}.
\end{align} is always non-empty, where $\mathcal{X}_t^{*\text{opt}}(\bm{\theta})$ is defined by KKT optimality conditions of Problem \ref{eq:reb}.
\end{proposition}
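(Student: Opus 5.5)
We mirror the argument used for Proposition~\ref{pro:pro3.6}: write down the KKT system of Problem~(\ref{eq:reb}) after the standard epigraph lifting of each ReLU term, and then exhibit one explicit choice of parameters and multipliers that satisfies it at the observed pair $(\bm{f}_t,\bm{g}_t)$. Concretely, introduce $\tau_k \geq -{\bm{\theta}_f^{(k)}}^\top \bm{f}_t + {\bm{\theta}_g^{(k)}}^\top \bm{g}_t - \mu^{(k)}$ and $\tau_k \geq 0$, with multipliers $z_{kt},y_{kt}\geq 0$ and $z_{kt}+y_{kt}=1$. Attach $\bm{\alpha}_t\geq 0$ to $\bm{q}_t - A(\bm{f}_t+\bm{g}_t)\geq \hat{\bm{q}}_t$, $\bm{\beta}_t\geq 0$ to the supply constraint, $\bm{\eta}^f_t,\bm{\eta}^g_t\geq 0$ to nonnegativity, and $\bm{\delta}_t\geq 0$ to $\bm{g}_t\leq\bm{\lambda}_t$. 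The stationarity conditions then read
\begin{align*}
\bm{c} - \textstyle\sum_k z_{kt}\bm{\theta}_f^{(k)} + A^\top\bm{\alpha}_t + G^\top\bm{\beta}_t - \bm{\eta}^f_t &= \bm{0},\\
-\bm{p} + \textstyle\sum_k z_{kt}\bm{\theta}_g^{(k)} + A^\top\bm{\alpha}_t + G^\top\bm{\beta}_t - \bm{\eta}^g_t + \bm{\delta}_t &= \bm{0},
\end{align*}
together with the usual complementary slackness and primal/dual feasibility conditions.

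The plan is to make every inequality multiplier vanish: $\bm{\alpha}_t=\bm{\beta}_t=\bm{\eta}^f_t=\bm{\eta}^g_t=\bm{\delta}_t=\bm{0}$. This makes complementary slackness for the physical constraints trivially true, whether or not those constraints are active at the expert point. Primal feasibility holds because the expert flows are feasible by assumption. Next take $z_{kt}=1$ and $y_{kt}=0$. Stationarity then reduces to $\sum_k \bm{\theta}_f^{(k)}=\bm{c}$ and $\sum_k \bm{\theta}_g^{(k)}=\bm{p}$, which is satisfied for example by $K=1$ with $\bm{\theta}_f^{(1)}=\bm{c}$ and $\bm{\theta}_g^{(1)}=\bm{p}$. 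With $y_{kt}=0$, the condition $y_{kt}\tau_k=0$ is automatic, and $z_{kt}=1$ forces the epigraph constraint to be tight: $\tau_k = -{\bm{\theta}_f^{(k)}}^\top\bm{f}_t+{\bm{\theta}_g^{(k)}}^\top\bm{g}_t-\mu^{(k)}$. We also need $\tau_k\geq 0$, so we choose $\mu^{(k)}\leq -{\bm{\theta}_f^{(k)}}^\top\bm{f}_t+{\bm{\theta}_g^{(k)}}^\top\bm{g}_t$. If the same parameters must serve every $t\in\mathcal{T}_e$, take $\mu^{(k)}$ to be the minimum of this quantity over the demonstrations, exactly as $\nu_k$ is chosen in the proof of Proposition~\ref{pro:pro3.6}.

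With these choices every KKT condition is met. The problem, after lifting, is a linear program, so KKT conditions are sufficient for optimality, and hence $(\bm{f}_t,\bm{g}_t)$ lies in $\mathcal{X}_t^{*\text{opt}}(\bm{\theta})$. The reason this works is that the ReLU term lies on its linear branch at every expert point, so it exactly cancels the original linear objective. The objective then becomes constant on the region where the ReLU is active, and every feasible point in that region, in particular the expert one, is optimal.

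The step needing the most care is the sign convention. The ReLU term enters as $-\bm{\theta}_f^\top\bm{f}+\bm{\theta}_g^\top\bm{g}$, so the cancellation requires $\bm{\theta}_f=\bm{c}$ and $\bm{\theta}_g=\bm{p}$, not their negatives. We should also check whether any constraint set $\bm{\Theta}$ (for instance a norm bound) is imposed. If one is, splitting the sum across several $k$ with smaller pieces still satisfies $\sum_k\bm{\theta}^{(k)}=(\bm{c},\bm{p})$. Since the proposition only asserts non-emptiness, this single explicit witness suffices.
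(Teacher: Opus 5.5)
Your proof is correct and follows the same route as the paper. Both arguments zero the physical-constraint multipliers, set $z_{kt}=1$, $y_{kt}=0$, read $\bm{\theta}$ off stationarity, and choose $\mu$ as a minimum over demonstrations so that the ReLU stays on its linear branch. The paper differs only in keeping a small multiplier $\bm{x}_i\preceq\epsilon\mathbf{1}$ on the subgoal constraint, giving $\bm{\theta}_f=\bm{c}+A^\top\bm{x}_i$ and $\bm{\theta}_g=\bm{p}-A^\top\bm{x}_i$, and in relying on a ``worst-case'' inactivity assumption and on $\bm{p}^\top\bm{g}_i>\bm{c}^\top\bm{f}_i$. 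Your $\bm{\alpha}_t=\bm{0}$ witness is its $\epsilon=0$ case: it needs neither assumption, and its parameters are automatically shared across all demonstrations.
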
 
Next, we will prove this proposition and also show that the solved parameters have practical significance which has not been discussed further in the general setting.
\begin{proof}
The inverse optimization solving for unknown parameters in Problem \ref{eq:reb} is formulated as follows
\begin{equation}
    \begin{gathered}
\min_{\bm{x}_t,\bm{y}_t,\bm{w}_t,\bm{v}_{1t},\bm{v}_{2t},\bm{z}_{1t},\bm{z}_{2t},\bm{\tau}_t,\bm{\theta},\mu}\sum_k\Vert(\bm{\theta}^{(k)})\Vert^2\\
         1-\bm{z}_{1t}^{(k)}-\bm{z}_{2t}^{(k)}=0,\ k = 1,...K,\ t = 1,...T\\    
        \bm{c}^T+\bm{x}_t^TA+\bm{y}_t^TG-\bm{w}_t^T-\sum_k\bm{z}_{1t}^{(k)}\bm{\theta}_f^{(k)T}+2l_1\bm{f}_t=0,\ t = 1,...T\\
         -\bm{p}^T+\bm{x}_t^TA+\bm{y}_t^TG-\bm{v}_{1t}^T+\bm{v}_{2t}^T+\sum_k\bm{z}_{1t}^{(k)}\bm{\theta}_g^{(k)^T}+2l_2\bm{g}_t=0,\ t = 1,...T\\
        \bm{x}_t^T(\bm{q}_t-A(\bm{f}_t+\bm{g}_t)-\hat{\bm{q}}_t)=0,\ t = 1,...T\\
        \bm{y}_t^T(\bm{q}_t-G(\bm{f}_t+\bm{g}_t)=0,\ t = 1,...T\\
        \bm{w}_t^T\bm{f}_t=0,\ t = 1,...T\\
        \bm{v}_{1t}^T\bm{g}_t=0,\ t = 1,...T\\
        \bm{v}_{2t}^T(\bm{g}_t-\bm{\lambda}_t)=0,\ t = 1,...T\\
        \text{(other feasiblity constraints)}
     \end{gathered}
    \label{eq:inv}
\end{equation}
Given historical data $\{\bm{f}_i, \bm{g}_i, \bm{q}_i, \bm{\lambda}_I\}_{i=1}^N$, we consider the worst-case scenario where
\[
G \bm{f}_i + G \bm{g}_i \neq \bm{q}_i, \quad \bm{f}_i \neq 0, \quad \bm{g}_i \neq 0, \quad \bm{g}_i \neq \bm{\lambda}_I \quad \forall i = 1, \dots, N.
\]
This implies that all corresponding dual variables
\[
\bm{y}_i = \bm{w}_i = \bm{v}_{1i} = \bm{v}_{2i} = 0 \quad \forall i = 1, \dots, N.
\]
We proceed with the analysis under this assumption. Moreover, We further set
\[
\bm{z}_{1i}^{(k)} = 1, \ \bm{z}_{2i}^{(k)} = 0
\]
which simplifies inverse problem corresponding to Problem \ref{eq:reb} into the following form:
\begin{equation}
    \label{eq:simplified_problem}
    \begin{aligned}
        \min_{\bm{\theta}^{(k)}, \mu^{(k)}, \bm{x}_i} \quad & \sum_k \|\bm{\theta}^{(k)}\|^2 \\
        \text{s.t.} \quad & -{\bm{\theta}_f^{(k)}}^\top \bm{f}_i + {\bm{\theta}_g^{(k)}}^\top \bm{g}_i - \mu^{(k)} = 
        \tau_i^{(k)}, \quad \forall k = 1, \dots, K, \\
        & \bm{c}^\top + \bm{x}_i^\top A - \sum_k {\bm{\theta}_f^{(k)}}^\top = 0, \\
        & -\bm{p}^\top + \bm{x}_i^\top A + \sum_k{\bm{\theta}_g^{(k)}}^\top = 0, \\
        & \text{(other feasibility constraints)}.
    \end{aligned}
\end{equation}

Now assume $\bm{x}_i \preceq \epsilon \mathbf{1}$ for some small $\epsilon > 0$. Then we have
\[
-\epsilon \mathbf{1} \preceq \bm{x}_i^\top A \preceq \epsilon \mathbf{1}.
\]
Suppose $\mathbf{c}, \mathbf{p} \in \mathbb{R}^n_{++}$. We can select $\epsilon$ such that
\[
\min_{1 \leq j \leq M} c_j \geq \epsilon, \qquad \min_{1 \leq j \leq M} p_j \geq \epsilon.
\]
Then the constraints in \ref{eq:simplified_problem} reduce to
\begin{equation}
    \label{eq:feasibility_constraints}
    \begin{aligned}
        -{\bm{\theta}_f^{(k)}}^\top \bm{f}_i + {\bm{\theta}_g^{(k)}}^\top \bm{g}_i - \mu^{(k)} &\geq 0, \quad \forall k = 1, \dots, K, \\
        \bm{c}^\top + \bm{x}_i^\top A - \sum_k{\bm{\theta}_f^{(k)}}^\top &= 0, \\
        -\bm{p}^\top + \bm{x}_i^\top A + \sum_k {\bm{\theta}_g^{(k)}}^\top &= 0.
    \end{aligned}
\end{equation}

Assume further that $K = 1$ (for simplicity). Then the last two equalities yield:
\[
\bm{\theta}_f = c + A^\top \bm{x}_i, \qquad \bm{\theta}_g = p - A^\top \bm{x}_i.
\]

In the context of autonomous vehicle rebalancing, it is reasonable to assume that the total revenue exceeds the cost, i.e., $\bm{p}^\top \bm{g}_i > \bm{c}^\top \bm{f}_i$. Then we compute:
\begin{equation}
    \label{eq:cost_revenue_diff}
    \begin{aligned}
        -\bm{\theta}_f^\top \bm{f}_i + \bm{\theta}_g^\top \bm{g}_i - \mu &= - (\bm{c}^\top + \bm{x}_i^\top A) \bm{f}_i + (\bm{p}^\top - \bm{x}_i^\top A) \bm{g}_i - \mu \\
        &= (\bm{p}^\top \bm{g}_i - \bm{c}^\top \bm{f}_i) - \bm{x}_i^\top A (\bm{f}_i + \bm{g}_i) - \mu \\
        &\geq (\bm{p}^\top \bm{g}_i - \bm{c}^\top \bm{f}_i) - \epsilon \mathbf{1}^\top (\bm{f}_i + \bm{g}_i) - \mu.
    \end{aligned}
\end{equation}

Therefore, we can choose $\epsilon$ sufficiently small and $\mu$ accordingly, such that the right-hand side of \ref{eq:cost_revenue_diff} remains non-negative. This guarantees that the constraints in \ref{eq:feasibility_constraints} are satisfied, hence the problem admits a feasible solution.
\end{proof}

Problem~\ref{eq:reb} is constructed so that the historical expert decisions $\{\bm{f}_i, \bm{g}_i\}_{i=1}^N$ are optimal solutions to the forward problem. There exist model parameters $\{\bm{\theta}_f^{(k)}, \bm{\theta}_g^{(k)}, \mu^{(k)}\}_{k=1}^K$ and desired next states $\{\hat{\bm{q}}_i\}_{i=1}^N$, which can be learned via reinforcement learning, such that the expert decisions become approximately optimal.


\subsection{Supply Chain Inventory Management}
The lower-level policy is formulated as follows
\begin{subequations}
    \begin{align}
\min_{f_{ij}^t, \bm{w}_i^t} \quad &\sum_{i \in V_W} m_i^O \cdot \bm{w}_i^t + \sum_{(i,j) \in \mathcal{E}} m_{ij}^T \cdot f_{ij}^t + \sum_{i\in V_S}\rho_i^f|\epsilon_i^f| + \sum_{i\in V_W}\rho_i^g|\epsilon_i^w|+\sum_{k=1}^K\max\{-\bm{\theta}_k^T\bm{f}^t+\mu_k,0\}+\sum_{k=1}^K\max\{-\bm{\beta}_k^T\bm{w}^t+\nu_k,0\}\\
\text{s.t.} \quad & \sum_{j \in N^-(i)} f_{ji}^t + \epsilon_i^f = \hat{q}_i^{t+1}, i \in V_S \\
& \bm{q}_i^t + \sum_{j \in N^-(i)} f_{ji}^t - d_i^t \le c_i^t, i \in V_S \\
& \sum_{j \in N^+(i)} f_{ij}^t \le \bm{q}_i^t, i \in V_W \\
& \bm{q}_i^t + w_i^t - \sum_{j \in N^+(i)} f_{ij}^t \le c_i^t, i \in V_W \\
& w_i^t + \epsilon_i^w = w_i^t, i \in V_W \\
& f_{ij}^t \ge 0, (i,j) \in \mathcal{E}, w_i^t\geq 0,i\in V_W
\end{align}
\end{subequations}

where $m^o,m^T,d,c$ are production cost, transportation cost, customer demand and capacity respectively, $w_i$ denotes order quantity in warehouse $i$ and $f_{ij}$ denotes the departing flow from warehouse $i$ to store $j$, and $q$ denotes the inventory level. Specifically, $\epsilon$ quantifies the magnitude of constraint violations, and the parameters $\beta$, $\mu$, and $\rho$ are estimated through the inverse optimization process to align the model with expert data.
\subsection{Mobile Robot}
\begin{subequations}
    \begin{align}
    \min_{\bm{u}_t=(v_t, \omega_t)} \quad & c_p \|\bm{p}_{t+1} - \hat{\bm{p}}_t\|^2 + c_h (\theta_{t+1} - \hat{\theta}_t)^2 + c_u \|\bm{u}_t\|^2 + \sum_{k=1}^K \max\{\bm{\alpha}_k^{\top} \bm{u}_t - \beta_k, 0\} \label{eq:robot_obj} \\
    \text{s.t.} \quad & x_{t+1} = x_t + T_s v_t \cos \theta_t \label{eq:robot_dyn_x} \\
    & y_{t+1} = y_t + T_s v_t \sin \theta_t \label{eq:robot_dyn_y} \\
    & \theta_{t+1} = \theta_t + T_s \omega_t \label{eq:robot_dyn_theta} \\
    & v_{\min} \le v_t \le v_{\max} \\
    & \omega_{\min} \le \omega_t \le \omega_{\max} \\
    & \|\bm{p}_{t+1} - \bm{p}_j^{\text{obs}}\| \ge r_j + d_{\text{safe}}, \quad \forall j = 1, \dots, M \label{eq:robot_safe}
\end{align}
\end{subequations}
Here, $v_t \in \mathbb{R}$ and $\omega_t \in \mathbb{R}$ denote the linear speed and angular velocity of the mobile robot, respectively. The vector $\bm{p}_t = (x_t, y_t)^\top \in \mathbb{R}^2$ represents the robot's position in the 2D Cartesian plane, while $\theta_t \in [-\pi, \pi)$ denotes its orientation. The term $\hat{\bm{p}}_t$ is the reference waypoint generated by the upper-level RL policy. In the objective function (33a), $c_p$, $c_h$, and $c_u$ are weighting coefficients for position tracking, heading alignment, and control effort. The parameters $\{\alpha_k, \beta_k\}_{k=1}^K$ characterize the structured safety cost learned via inverse optimization from expert demonstrations. In the constraints, $T_s$ is the sampling time, while $[v_{\min}, v_{\max}]$ and $[\omega_{\min}, \omega_{\max}]$ define the kinematic limits of the unicycle model. Lastly, $\bm{p}_j^{\text{obs}}$, $r_j$, and $d_{\text{safe}}$ represent the position of the $j$-th obstacle, its radius, and the required safety margin, respectively.

\section{Further Experimental Results and Discussions}
\label{app:experiment}
\subsection{RL Methodology and Experimental Setup}
\label{app:c11}
 \begin{itemize}
        \item \textbf{Autonomous Vehicle Rebalancing}: For the Autonomous Vehicle Rebalancing experiment, we train an A2C-GNN agent as in \citep{gammelli2023graph} using on-policy updates with a discount factor of $0.99$, a maximum of $50$ decision steps per episode, and $10000$ training episodes. The actor and critic share the same EdgeConv-based GNN encoder and each head is implemented as a two-layer multilayer perceptron with $256$ hidden units, and both networks are optimized with the Adam optimizer with a learning rate of $5\times 10^{-5}$. At the end of each episode we normalize the discounted returns, perform a single A2C update of actor and critic parameters with gradient-norm clipping at $5$ to improve stability, and we save the checkpoint achieving the highest cumulative training reward for later evaluation.
        \item \textbf{Inventory Management}: We adopt a graph neural network-based Advantage Actor–Critic (A2C) method as in \citep{gammelli2023graph} to learn the control policy. Training is performed with an on-policy A2C update, using a discount factor of 0.99, a maximum episode length of 30 steps, and up to 20,000 training episodes. Both actor and critic are optimized with Adam with learning rate 5e-5, and gradient norm clipping is applied to improve training stability.
        \item \textbf{Mobile Robot Navigation}: For the mobile robot, we use an off-policy Soft Actor–Critic (SAC) algorithm. SAC is trained with a replay buffer of size $10^6$, a mini-batch size of $256$, Adam optimizer with learning rate $3\times 10^{-4}$ for both actor and critic, discount factor $\gamma = 0.99$, soft target network updates with $\tau = 0.005$, and automatic entropy tuning with target entropy set to $-\text{dim}(\mathcal{A})$. Gradient updates start once the replay buffer contains more than $400$ transitions, and the agent is trained for $500$ episodes, periodically saving the actor network during training.
    \end{itemize}

\subsection{Performance under Non-Convex True Cost}
\label{sec:non_convex_cost}

To address the performance of our framework under non-convex settings, we conducted a supplementary experiment in the mobile robot navigation domain. Specifically, we introduced a non-convex obstacle avoidance penalty into the overarching true cost function, modeled as a Gaussian repulsive field---a widely adopted formulation in the robotics literature to represent smooth, non-convex spatial constraints:
\begin{equation}
    C_{obs}(p) = \sum_{i=1}^{M} A_i \exp \left( - \frac{\| p - o_i \|^2}{2\sigma_i^2} \right)
\end{equation}
where $p$ is the robot's position, $o_i$ is the obstacle's position, $A_i$ is the penalty amplitude, and $\sigma_i$ controls the field's range.

Empirical results, as summarized in Table \ref{tab:non_convex_cost}, demonstrate that the learned convex surrogate remains highly effective. Specifically, the subgoal-guided terms and the learned ReLU-based regularization terms successfully steer the robot to safely bypass obstacles and efficiently reach its destination. Notably, the baseline bi-level method lacking this ReLU-based term requires more manual effort to tune the hyperparameters within the cost function.

\begin{table}[ht]
\centering
\caption{Performance Comparison on Mobile Robot Navigation under Non-Convex True Cost}
\label{tab:non_convex_cost}
\begin{tabular}{lccc}
\toprule
\textbf{Method} & \textbf{Average Nav Time (s)} & \textbf{Average Path Length (m)} & \textbf{Energy (J)} \\
\midrule
End-to-end RL & 7.45 & 4.27 & 6.80 \\
Bi-level-original & 8.10 & 3.82 & 4.93 \\
\textbf{Bi-level-learned (Ours)} & \textbf{4.45} & \textbf{3.93} & \textbf{2.63} \\
\bottomrule
\end{tabular}
\end{table}

    \subsection{Reward-Shaping Sensitivity}
We have conducted an explicit reward-shaping sensitivity experiment on the mobile robot navigation task. By varying the weights in the reward function, we observe that our method still remains robust according to~\ref{tab:nav_cu_001},~\ref{tab:nav_cu_005} and~\ref{tab:nav_cu_01}.

\begin{table}[t]
\centering
\caption{Performance Comparison on Mobile Robot Navigation ($c_u = 0.01$).}
\label{tab:nav_cu_001}
\begin{tabular}{lccc}
\toprule
Method & Travel Time (s) & Path Length (m) & Energy (J) \\
\midrule
MPC & $3.50 \pm 0.00$ & $3.50 \pm 0.00$ & $4.81 \pm 0.00$ \\
End-to-end RL & $7.60 \pm 0.11$ & $4.29 \pm 0.02$ & $6.78 \pm 0.23$ \\
Bi-level-unchanged & $9.14 \pm 0.08$ & $4.35 \pm 0.02$ & $6.20 \pm 0.13$ \\
Bi-level-learned (Ours) & $\mathbf{4.82 \pm 0.30}$ & $\mathbf{4.16 \pm 0.19}$ & $\mathbf{2.92 \pm 1.04}$ \\
\bottomrule
\end{tabular}
\end{table}

\begin{table}[t]
\centering
\caption{Performance Comparison on Mobile Robot Navigation ($c_u = 0.05$).}
\label{tab:nav_cu_005}
\begin{tabular}{lccc}
\toprule
Method & Travel Time (s) & Path Length (m) & Energy (J) \\
\midrule
MPC & $3.50 \pm 0.00$ & $3.50 \pm 0.00$ & $4.81 \pm 0.00$ \\
End-to-end RL & $7.60 \pm 0.10$ & $4.27 \pm 0.00$ & $6.84 \pm 0.31$ \\
Bi-level-unchanged & $9.10 \pm 0.10$ & $4.31 \pm 0.02$ & $6.41 \pm 0.20$ \\
Bi-level-learned (Ours) & $\mathbf{4.40 \pm 0.10}$ & $\mathbf{4.14 \pm 0.07}$ & $\mathbf{1.52 \pm 0.06}$ \\
\bottomrule
\end{tabular}
\end{table}

\begin{table}[t]
\centering
\caption{Performance Comparison on Mobile Robot Navigation ($c_u = 0.1$).}
\label{tab:nav_cu_01}
\begin{tabular}{lccc}
\toprule
Method & Travel Time (s) & Path Length (m) & Energy (J) \\
\midrule
MPC & $3.50$ & $3.50$ & $4.81$ \\
End-to-end RL & $7.53 \pm 0.06$ & $4.28 \pm 0.01$ & $6.85 \pm 0.19$ \\
Bi-level-unchanged & $9.07 \pm 0.06$ & $4.33 \pm 0.02$ & $6.02 \pm 0.03$ \\
Bi-level-learned (Ours) & $\mathbf{4.00 \pm 0.00}$ & $\mathbf{3.93 \pm 0.01}$ & $\mathbf{2.84 \pm 0.06}$ \\
\bottomrule
\end{tabular}
\end{table}

Across all three case studies we employ standard RL methods (GNN-A2C and SAC) with conventional hyperparameters and reward designs that closely mirror the MPC stage costs. In the revised version, we will clarify that we use off-the-shelf RL implementations with fixed hyperparameters and reward shaping across all variants, so that the comparison cleanly isolates the impact of the learned lower-level FOP.

\subsection{Motivations for Inverse Optimization Approach}
We justify our choice of the proposed inverse optimization approach from three perspectives: the structural design of the cost function and the ability to extract and interpert the behavior pattern contained in expert demonstrations, and present the efficacy of the proposed inverse optimizaiton-guided RL-OC framework compared with purely imitation learning. 
\subsubsection{Structural Design Guided by Theoretical Properties}
\label{app:c31}
The primary motivation for adopting the inverse optimization framework lies in its ability to theoretically inform the structure of the lower-level cost function. Standard approaches often heuristically assume a fixed cost form (e.g., quadratic). We implement the value approximation method proposed by \citet{abdufattokhov2021learning}, where quadratic terminal cost is estimated via value functions and inverse MPC method proposed by \citep{zhang2024inverse} which assumes quadratic structure of terminal cost. To ensure a rigorous comparison, we further implement a baseline termed IMPC-bi-level, which integrates the terminal cost identified by Inverse MPC into the structured RL-OC framework. We compare the performance on autonomous vehicle rebalancing task. As table \ref{tab:av_results2} suggests, our proposed framework outperforms all three baselines. 
\begin{table}[h!]
\centering
\caption{Performance Comparison on Autonomous Vehicle (AV) Rebalancing Task.}
\label{tab:av_results2}
\begin{tabular}{@{}lcc@{}}
\toprule
\textbf{Method} & \textbf{Reward} & \textbf{Served Demand} \\
\midrule
MPC (Oracle) & 35725 ($\pm$41.6) & 3203 ($\pm$3.07) \\
\midrule
Value Approximation & 24774 ($\pm$213.5) & 2395 ($\pm$14.9) \\
IMPC & 32426 ($\pm$401.2) & 2996 ($\pm$28.6) \\
IMPC-bi-level & 33431 ($\pm$320.4) & 3044 ($\pm$25.3) \\
\textbf{Bi-level-learned (Ours)} & \textbf{35019} ($\pm$53.0) & \textbf{3141} ($\pm$6.14) \\
\bottomrule
\end{tabular}
\end{table}

\subsubsection{Interpretability}
\textbf{Interpretations of Optimal Parameters.}
The optimal parameters inferred by inverse optimization are visualized in the following figures, which show that different weights are assigned to different edges. For example, the district "0" of higher value motivates the repositioning of idle vehicles towards this area, particularly along paths with low travel costs. Thus, a significant penalty is applied to insufficient rebalancing flows on the corresponding routes. The yellow line in the figure, which denotes high inferred penalty weights, achieves this motivation. In scenario 2, the situation becomes more complex, but we can still notice some "important" routes and the "not important" ones are not penalized. 

\begin{figure*} 
    \centering 
    \begin{subfigure}[h]{0.49\linewidth} 
        \centering 
        \includegraphics[width=\linewidth]{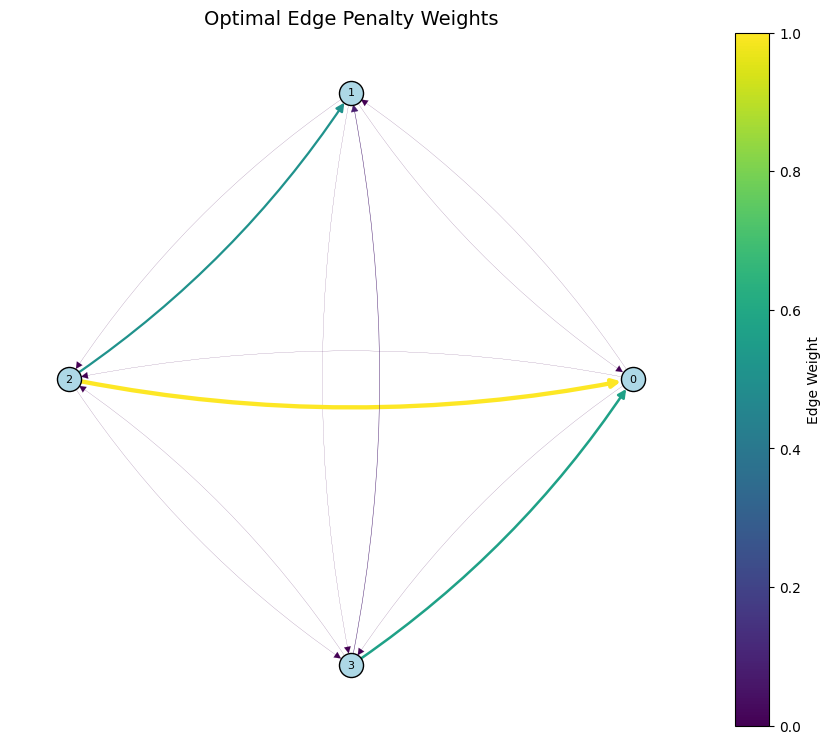} 
        \caption{Optimal Weights in Scenario 1}
        \label{fig:image-a}
    \end{subfigure}
    \hfill 
    \begin{subfigure}[h]{0.49\linewidth} 
        \centering
        \includegraphics[width=\linewidth]{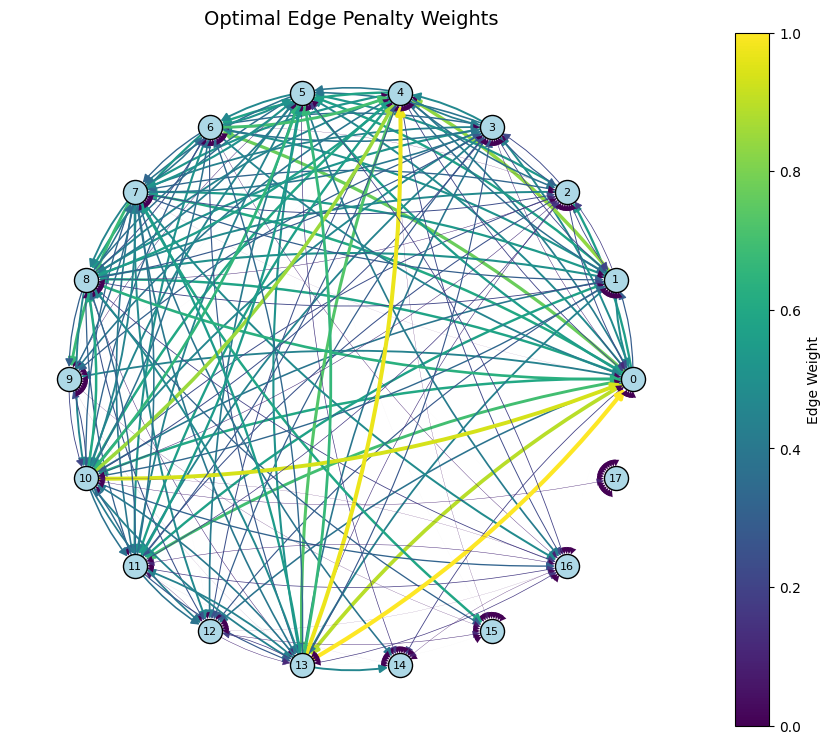} 
        \caption{Optimal Weights in Scenario 2}
        \label{fig:image-b}
    \end{subfigure}
    \caption{Visualizations of optimal parameters solved by inverse optimization}
\end{figure*}
\textbf{Interpretations of Outputs.}
To gain deeper insights into the effectiveness of the inverse method and explain model interpretability, we analyze how decisions made by two bi-level frameworks deviate from trajectories made by MPC. Table~\ref{tab:distance_metrics} shows that Bi-level-learned framework make decisions much closer to MPC trajectories, which is measured by cosine similarity and Manhattan Distance. And Figure~\ref{fig:two_tasks} shows the number of vehicles in the AV network and the inventory level over time. Similarly, our framework produces a trajectory that is closer to the MPC solution. It indicates that the learned lower-level policy based on expert demonstrations extracts information efficiently compared with the original policy with short-sighted cost functions. 

\begin{table}[h!]
    \centering
    \caption{Comparison of the cosine similarity and Manhattan distance between the rebalancing flow \( f \) obtained from different bi-level models and the flow generated by the MPC.}
    \begin{tabular}{lcc}
    \hline
    \textbf{Formulation} & \textbf{Cosine Similarity} & \textbf{Manhattan Distance} \\
    \hline
    Bi-level-original & 0.477 & 44 \\
    Bi-level-learned &\textbf{0.976} & \textbf{7} \\
    \hline
    \end{tabular}
    \label{tab:distance_metrics}
\end{table}

\begin{figure*}
        \begin{subfigure}[t]{0.49\linewidth}
        \centering
        \includegraphics[width=\linewidth, height=6cm]{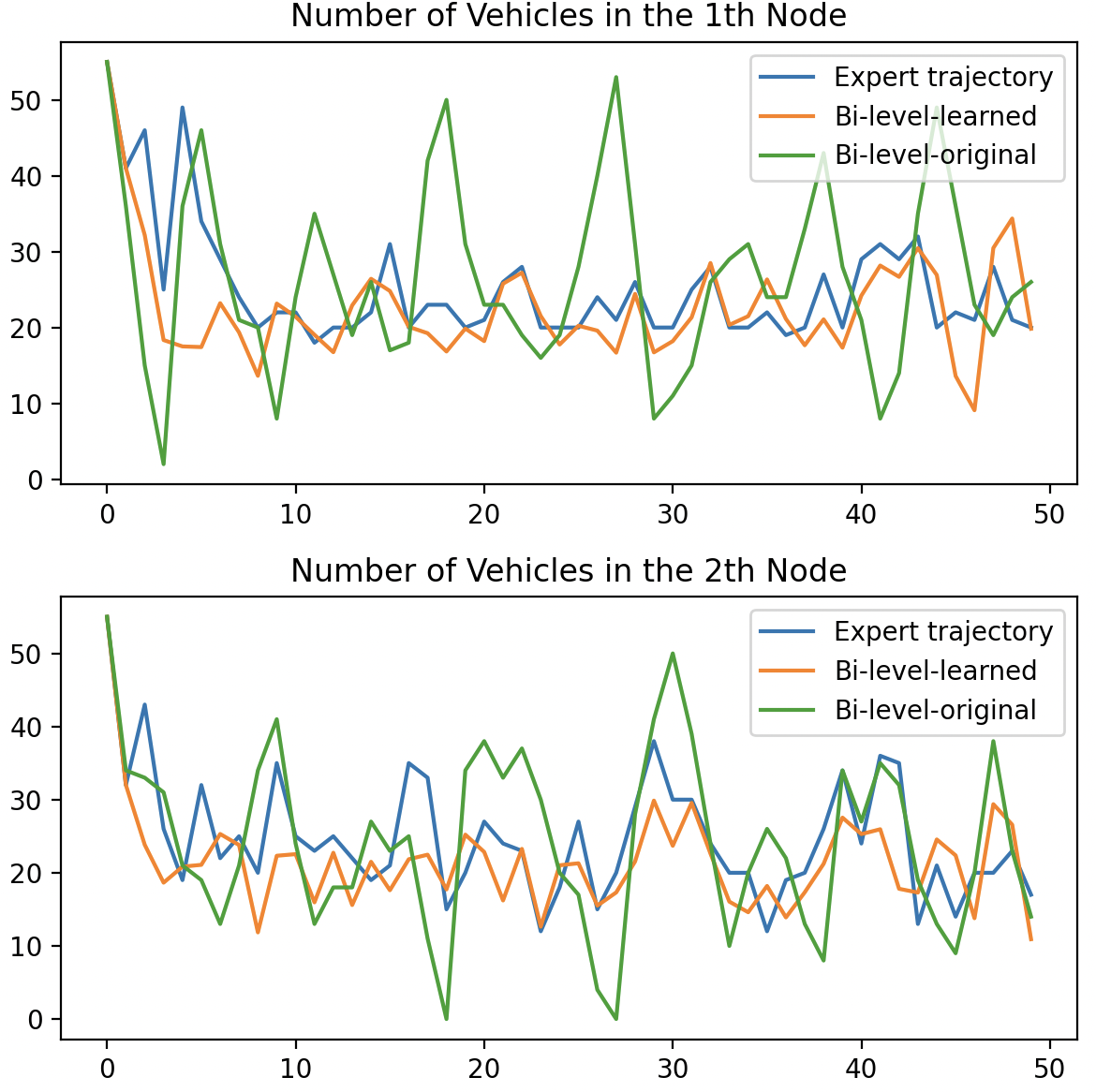}
        \caption{Number of vehicles in nodes across different methods}
        \label{fig:mobile_ori}
    \end{subfigure}\hfill
    \begin{subfigure}[t]{0.49\linewidth}
        \centering
        \includegraphics[width=\linewidth, height=6cm]{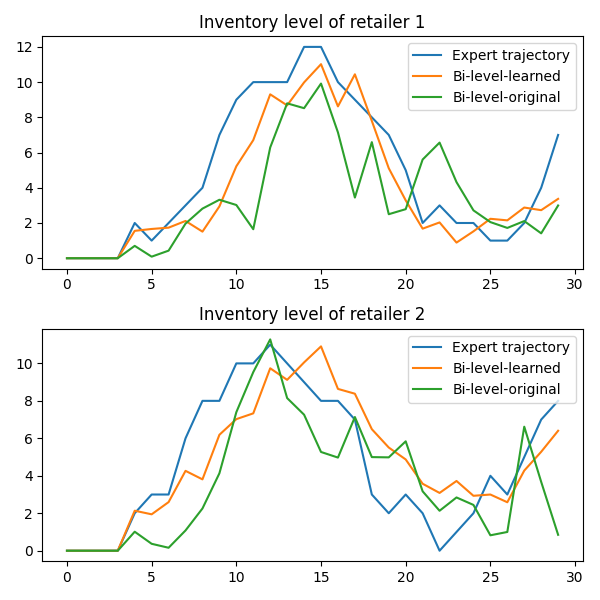}
        \caption{Inventory level of retailers in chosen nodes across different methods}
        \label{fig:mobile_learned}
    \end{subfigure}
    \caption{Comparison of system state trajectories in AV Rebalancing and SCIM across different methods}
    \label{fig:two_tasks}
\end{figure*}

\subsubsection{Data efficiency}

We further benchmark against imitation learning (IL) to demonstrate the superior sample efficiency of our approach. We implemented a set of imitation-learning baselines trained on varying numbers of expert demonstrations, and considered several supervised learning architectures as candidate policy models, including multilayer perceptrons (MLPs), Random Forest regressors, and k-Nearest Neighbors (kNN). Across the configurations we tested, these methods achieved broadly similar performance. As shown in the table~\ref{tab:imitation}, our bi-level framework consistently outperforms these imitation-learning baselines and maintains strong performance even when the number of expert demonstrations is relatively small, indicating favorable sample efficiency.

\begin{table}[H]
\centering
\caption{Performance comparison with imitation learning}
\label{tab:imitation}
\begin{tabular}{lccc}
\toprule
\textbf{Data Size} & \textbf{MPC} & \textbf{Inv-Bi-level} & \textbf{Imitation Learning} \\
\midrule
Small ($N = 100$)   & $35725(\pm 41.6)$  & $35019(\pm 53.0)$  & $33606(\pm 199.6)$ \\
Medium ($N = 500$)  & *                 & *                 & $33709(\pm 216.3)$ \\
Large ($N = 2000$)  & *                 & *                 & $33285(\pm 203.6)$ \\
\bottomrule
\end{tabular}

\vspace{0.5em}
\footnotesize
\noindent
\textbf{Note:} The asterisks (*) indicate that experiments were not required for these configurations.
The MPC baseline is independent of the training dataset size. For our Inv-Bi-level approach,
it effectively recovers the underlying cost function with minimal data, rendering larger datasets redundant.
\end{table}

\subsection{Sensitivity Analysis}
In this subsection, we analyze how the number of ReLU terms included in the cost function and the noise in expert data influence the performance of inverse optimization. We use cosine similarity and Manhattan distance between the noise-free expert decisions and decisions made by solving the lower-level optimization problem with different settings. 
\begin{itemize}
    \item Sensitivity to Number of ReLU Terms
\end{itemize}
\begin{figure}
    \centering
\includegraphics[width=0.6\linewidth]{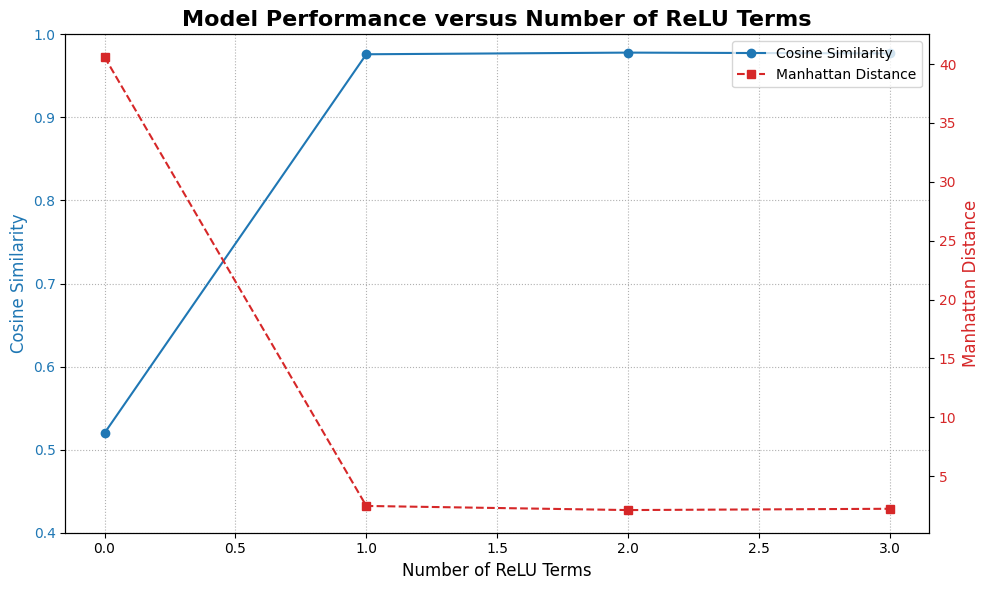}
    \caption{Sensitivity to the Number of ReLU Terms}
    \label{fig:relu}
\end{figure}
Based on Figure \ref{fig:relu}, we notice that the quality of solutions doesn't vary a lot as number of ReLU terms varies. We can infer well-performed parameters via inverse optimization utilizing only a handful of expert decisions and very few ReLU terms. 
\begin{itemize}
    \item Sensitivity to Data Noise
\end{itemize}
Sometimes given demonstrations are not optimal, thus we figure out how the noise affect inverse optimization. As figure \ref{fig:noise} shows, the solution is much closer to optimal expert demonstrations although some noise exists in offline data. Moreover, when we utilize noisy expert demonstrations to derive lower-level problem formulation and apply it in the bi-level framework, the framework improves the reward compared to the one achieved by original noisy expert demonstrations by $1.7\%$. Although the improvement is not so significant, it suggests a promising direction that we can employ our inverse optimization-guided bi-level framework to extract insights from sub-optimal offline data and make better decisions. 
\begin{figure}
    \centering
    \includegraphics[width=0.6\linewidth]{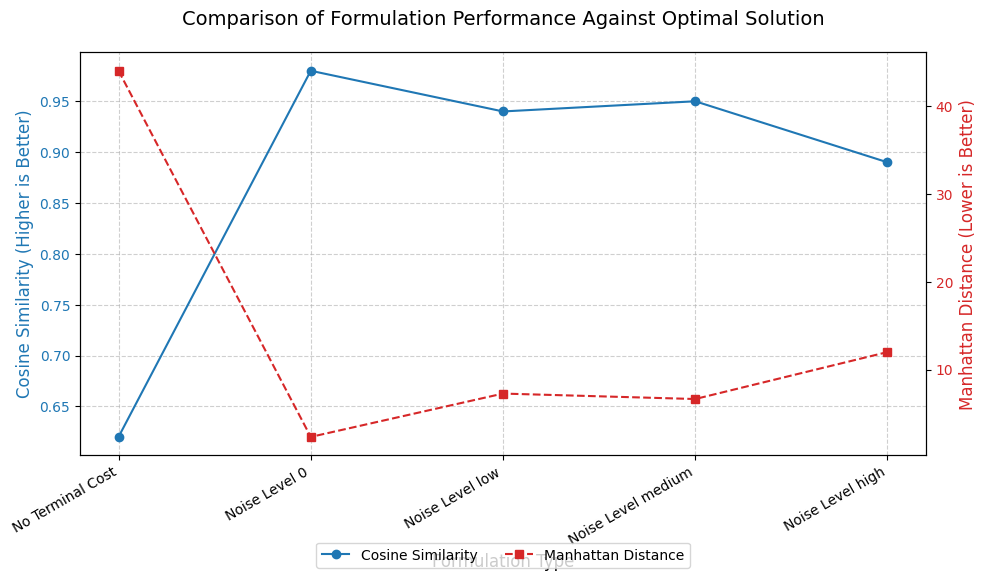}
    \caption{Sensitivity to Noise in Offline Data}
    \label{fig:noise}
\end{figure}

\subsection{Scalability Discussions}
\subsubsection{Solve-time Scaling for Inverse Optimization}
We systematically vary the problem dimension, the number of ReLU terms in the objective, and the number of expert demonstrations, and plot the resulting wall-clock solve times as shown in Figure~\ref{fig:scaling}.

\begin{figure}
    \centering
    \includegraphics[width=1.1\linewidth]{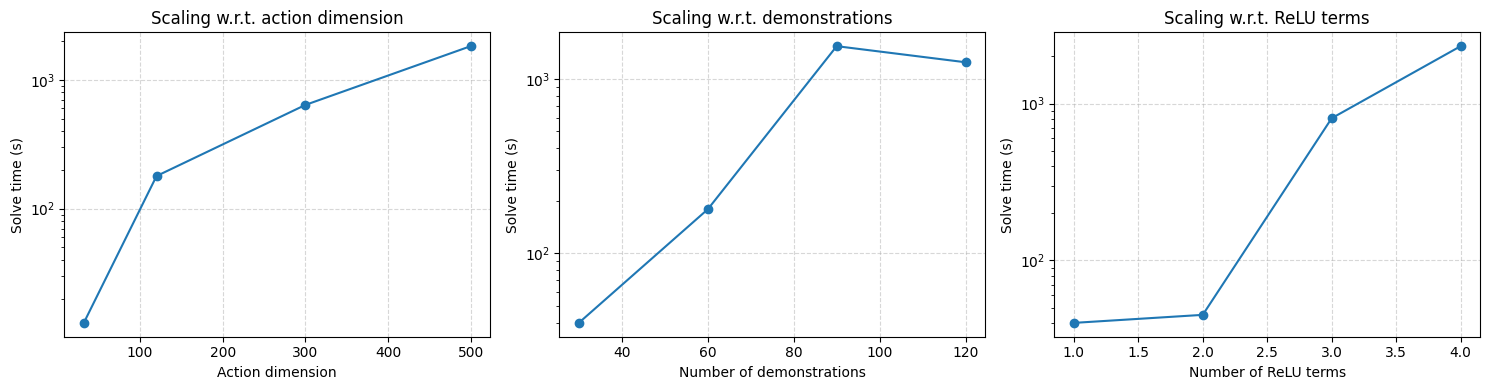}
    \caption{Solve-time Scaling Curve}
    \label{fig:scaling}
\end{figure}

We have observed that solve-time increases predictably with problem dimensions and the number of demonstrations, remaining tractable for typical system sizes without imposing prohibitive bottlenecks. The primary driver of complexity is the number of ReLU terms ($K$). While $K=1$ is computationally lightweight, increasing $K$ introduces significant combinatorial complexity. To address this, we reformulate the inverse problem as a Mixed-Integer Program (MIP) by treating the auxiliary dual variables associated with ReLU activation as binary. This reformulation is mathematically rigorous—aligning with the inherent discrete logic of ReLU functions—and leverages efficient branch-and-bound solvers to drastically accelerate convergence without compromising solution quality. We also note that solve times can vary even for instances of identical size, driven by the specific geometry of the expert demonstrations and the complexity of the active constraint set required to rationalize the data. 
\subsubsection{Learning Structured Costs at Scale}
We also evaluate a gradient-based alternative that keeps the same ReLU-based terminal-cost structure but learns its parameters by embedding the lower-level problem as a differentiable convex program via CVXPYLayers. The learned cost $d_{\theta}$ is then used in our bi-level RL-OC framework. Table~\ref{tab:cvxpy} and~\ref{tab:cvxpy_robot} reports results on autonomous vehicle rebalancing, supply chain inventory management and mobile robot navigation. Compared with solving the inverse problem to high precision—which can be challenging at larger scales—this approach enables efficient differentiating through the optimization layer. Empirically, CVXPYLayers performs slightly worse than inverse optimization but remains comparable in AV rabalancing and SCIM tasks, which highlights that our structured cost design is effective and compatible with different solution pipelines. However, performance of such gradient-based methods degrades in mobile robot navigation. This outcome reveals a distinct trade-off: while it hints at the potential of differentiable layers to solve for parameters at scale, it also implies that gradient-based methods might fail to identify underlying structures that yield generalizable insights.
\begin{table*}[t]
\centering
\caption{Performance comparison on two tasks}
\label{tab:cvxpy}
\setlength{\tabcolsep}{10pt}
\begin{tabular}{lcc@{\hspace{18pt}}lcc}
\toprule
& \multicolumn{2}{c}{\textbf{AV Rebalancing}} && \multicolumn{2}{c}{\textbf{Supply Chain Inventory Management}} \\
\cmidrule(r){2-3}\cmidrule(l){5-6}
\textbf{Method} & \textbf{Reward} & \textbf{Served Demand} &&
\textbf{Reward} & \textbf{Served Demand} \\
\midrule
MPC & 35725 ($\pm$41.6) & 3203 ($\pm$3.07) && 11335 ($\pm$34.3) & 1337 ($\pm$5.61) \\
Bi-level-cvxpy &34403 ($\pm$108.7) & 3086 ($\pm$7.06) && 9268 ($\pm$32.1) & 912 ($\pm$2.20) \\
Bi-level-learned & \textbf{35019} ($\pm$53.0) & \textbf{3141} && \textbf{9442} ($\pm$42.3) & \textbf{930} ($\pm$1.17) \\
\bottomrule
\end{tabular}
\end{table*}

\begin{table*}[h!]
\centering
\caption{Performance Comparison on Mobile Robot Navigation.}
\begin{tabular}{lccc}
\hline
\textbf{Method} & \textbf{Travel Time (s)} & \textbf{Path Length (m)} & \textbf{Energy (J)} \\
\hline
MPC                      & $3.50$ & $3.50$ & $4.81$  \\
End-to-end RL            & $7.60 \pm 0.11$ & $4.29 \pm 0.02$ & $6.78 \pm 0.23$  \\
Bi-level-cvxpy& $11.2 \pm 0.20$ & $4.57 \pm 0.03$ & $7.00 \pm 0.02$\\
Bi-level-learned (Ours)             & $\textbf{4.82} \pm 0.30$ & $\textbf{4.16} \pm 0.19$ & $\textbf{2.92} \pm 1.04$  \\
\hline
\end{tabular}
\vspace{3pt} \\
{\footnotesize An asterisk (*) indicates that no valid result was obtained for this method.}
\label{tab:cvxpy_robot}
\end{table*}

\end{document}